\DeclareMathOperator*{\argmin}{argmin}
\algnewcommand{\LineComment}[1]{\Statex \(\triangleright\) #1}
  \providecommand\BibTeX{{%
    \normalfont B\kern-0.5em{\scshape i\kern-0.25em b}\kern-0.8em\TeX}}}
\begin{document}

\newcounter{lemma}
%\newcounter{theorem}
\newcounter{corollary}
\newtheorem{lem}[lemma]{Lemma}
\newtheorem{thm}[theorem]{Theorem}
\newtheorem{cor}[corollary]{Corollary}

%%
%% The "title" command has an optional parameter,
%% allowing the author to define a "short title" to be used in page headers.
\title{Learning Manifolds from Non-stationary Streams}

%%
%% The "author" command and its associated commands are used to define
%% the authors and their affiliations.
%% Of note is the shared affiliation of the first two authors, and the
%% "authornote" and "authornotemark" commands
%% used to denote shared contribution to the research.
\author{Suchismit Mahapatra}
% \authornote{Both authors contributed equally to this research.}
\email{suchismi@buffalo.edu}
%\orcid{1234-5678-9012}
\author{Varun Chandola}
%\authornotemark[1]
\email{chandola@buffalo.edu}
\affiliation{%
  \institution{\\Department of Computer Science and Engineering, University at Buffalo}
  %\streetaddress{P.O. Box 1212}
  \city{Buffalo}
  \state{New York}
  \postcode{14260-1660}
}
\renewcommand{\shortauthors}{Mahapatra and Chandola}

%%
%% The abstract is a short summary of the work to be presented in the
%% article.
\begin{abstract}
  Streaming adaptations of manifold learning based dimensionality reduction methods, such as {\em Isomap}, are based on the assumption that a small initial batch of observations is enough for exact learning of the manifold, while remaining streaming data instances can be cheaply mapped to this manifold. However, there are no theoretical results to show that this core assumption is valid. Moreover, such methods typically assume that the underlying data distribution is stationary. Such methods are not equipped to detect, or handle, sudden changes or gradual drifts in the distribution that may occur when the data is streaming. We present theoretical results to show that the quality of a manifold asymptotically converges as the size of data increases. We then show that a Gaussian Process Regression (GPR) model, that uses a manifold-specific kernel function and is trained on an initial batch of sufficient size, can closely approximate the state-of-art streaming Isomap algorithms. The predictive variance obtained from the GPR prediction is then shown to be an effective detector of  changes in the underlying data distribution. Results on several synthetic and real data sets show that the resulting algorithm can effectively learn lower dimensional representation of high dimensional data in a streaming setting, while identifying shifts in the generative distribution.
\end{abstract}

%%
%% The code below is generated by the tool at http://dl.acm.org/ccs.cfm.
%% Please copy and paste the code instead of the example below.
%%

\begin{CCSXML}
<ccs2012>
   <concept>
       <concept_id>10010147.10010257.10010258.10010260.10010271</concept_id>
       <concept_desc>Computing methodologies~Dimensionality reduction and manifold learning</concept_desc>
       <concept_significance>500</concept_significance>
       </concept>
   <concept>
       <concept_id>10002951.10003227.10003236.10003239</concept_id>
       <concept_desc>Information systems~Data streaming</concept_desc>
       <concept_significance>500</concept_significance>
       </concept>
   <concept>
       <concept_id>10010147.10010257.10010293.10010075.10010296</concept_id>
       <concept_desc>Computing methodologies~Gaussian processes</concept_desc>
       <concept_significance>500</concept_significance>
       </concept>
 </ccs2012>
\end{CCSXML}

\ccsdesc[500]{Computing methodologies~Dimensionality reduction and manifold learning}
\ccsdesc[500]{Information systems~Data streaming}
\ccsdesc[500]{Computing methodologies~Gaussian processes}

%%
%% Keywords. The author(s) should pick words that accurately describe
%% the work being presented. Separate the keywords with commas.
\keywords{Manifold Learning, Dimensionality Reduction, Streaming data, Isomap, Gaussian Process}

%%
%% This command processes the author and affiliation and title
%% information and builds the first part of the formatted document.
\maketitle
\section{Introduction}\label{sec:introduction}
High-dimensional data is inherently difficult to explore and analyze, owing to the ``curse of dimensionality'' that render many statistical and machine learning techniques inadequate. In this context, {\em non-linear dimensionality reduction} (NLDR) has proved to be an indispensable tool. Manifold learning based NLDR methods, such as Isomap~\citep{tenenbaum2000}, Local Linear Embedding (LLE)~\citep{roweis2000}, etc., assume
that the distribution of the data in the high-dimensional observed space is not uniform. Instead, the data is assumed to lie near a non-linear low-dimensional manifold embedded in the high-dimensional space. By exploiting the geometric properties of the manifold, e.g., smoothness, such methods infer the low-dimensional representation of the data from the high-dimensional observations.

% \vc{Is this correct to say? I want to add one line about what NLDR methods do, in general.}

A key shortcoming of NLDR methods is their $O(n^3)$ complexity, where $n$ is the size of the data. If directly applied on streaming data, where data arrives one point at a time, NLDR methods have to recompute the entire manifold at every time step, making such a naive adaptation prohibitively expensive. To alleviate the computational problem, landmark-based methods~\citep{silva2003} or general out-of-sample extension methods~\citep{wu2004} have been proposed. However, these techniques are still computationally expensive for practical applications. Recently, a streaming adaptation of the Isomap algorithm~\citep{tenenbaum2000}, which is a widely used NLDR method, was proposed~\citep{schoeneman2017}. This method, called S-Isomap, relies on exact learning from a small initial batch of observations, followed by approximate mapping of subsequent stream of observations. An extension to the case when the observations are sampled from multiple, and possibly intersecting, manifolds, called S-Isomap++, was subsequently proposed~\citep{mahapatra2017}. 

Empirical results on benchmark data sets show that these methods can reliably learn the manifold with a small initial batch of observations. However two issues still remain. {\em First}, no theoretical bounds on the quality of the manifold, as a function of the initial batch size, exist. {\em Second}, these methods assume that the underlying generative distribution is {\em stationary} over the stream, and are unable to detect when the distribution ``drifts'' or abruptly ``shifts'' away from the base, resulting in incorrect low-dimensional mappings (See Figure~\ref{fig:expts}). 

\begin{figure}[!htbp]
\centering
\includegraphics[width=1.0\textwidth]{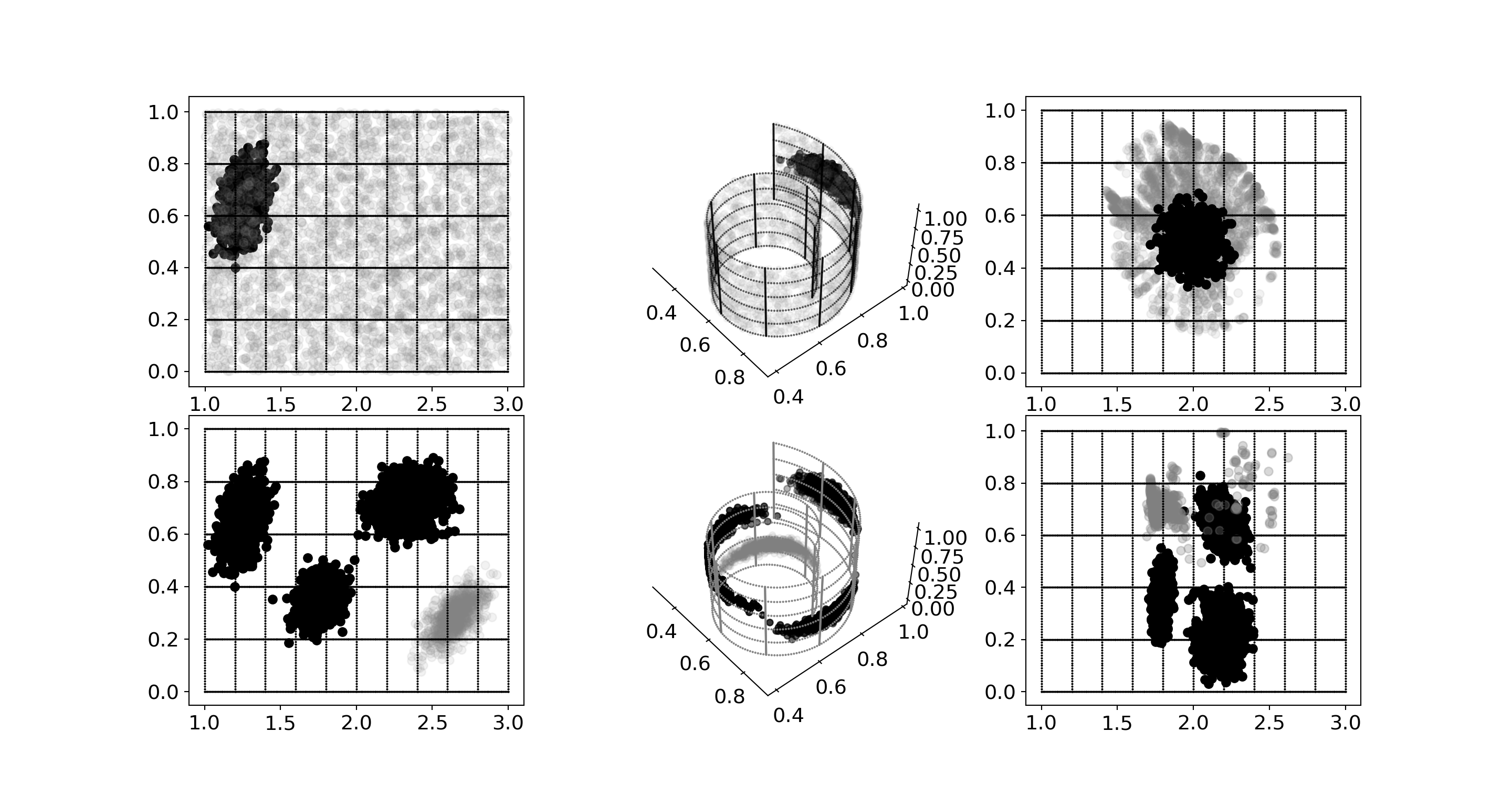}
\caption{Impact of changes in the data distribution on streaming NLDR. In the {\em top} panel, the true data lies on a 2D manifold ({\em top-left}) and the observed data is in $\mathbb{R}^3$ obtained by using the {\em swiss-roll} transformation of the 2D data ({\em top-middle}). The streaming algorithm (S-Isomap~\citep{schoeneman2017}) uses a batch of samples from a 2D Gaussian (black), and maps streaming points sampled from a uniform distribution (gray). The streaming algorithm performs well on mapping the batch points to $\mathbb{R}^2$ but fails on the streaming points that ``drift'' away from the batch ({\em top-right}). In the {\em bottom} panel, the streaming algorithm (S-Isomap++~\citep{mahapatra2017}) uses a batch of samples from three 2D Gaussians (black). The stream points are sampled from the three Gaussians and a new Gaussian (gray). The streaming algorithm performs well on mapping the batch points to $\mathbb{R}^2$ but fails on the streaming points that are ``shifted'' from the batch ({\em bottom-right}). Both streaming algorithms are discussed in Section~\ref{sec:preliminaries}.}
\label{fig:expts}
\end{figure}
The focus of this paper is two-fold. We first provide theoretical results that show that the quality\footnote{See Section~\ref{sec:preliminaries} for the definition of manifold quality} of a manifold, as learnt by Isomap, asymptotically converges as the data size, $n$, increases. This is a necessary result to show the correctness of streaming methods such as S-Isomap and S-Isomap++, under the assumption of stationarity. Next, we propose a methodology to detect changes in the underlying distribution of the stream properties (drifts and shifts), and inform the streaming methods to update the base manifold.

We employ a Gaussian Process (GP)~\citep{williams2001} based adaptation of Isomap to process high-throughput streams.  The use of GP is enabled by a kernel that measures the relationship between a pair of observations along the manifold, and not in the original high-dimensional space. We prove that the low-dimensional representations inferred using the GP based method -- {\em GP-Isomap} -- are equivalent to the representations obtained using the state-of-art streaming Isomap methods~\citep{schoeneman2017,mahapatra2017}. Additionally, we empirically show, on synthetic and real data sets, that the predictive variance associated with the GP predictions is an effective indicator of the changes (either gradual drifts or sudden shifts) in the underlying generative distribution, and can be employed to inform the algorithm to ``re-learn'' the core manifold.
\subsection{Organization}
The rest of the paper is organized as follows: Section~\ref{sec:related_work} discusses the literature in the related areas. In Section~\ref{sec:preliminaries}, we formulate the NLDR problem and discuss preliminaries related to it. Section~\ref{sec:conv_proof} is dedicated to convergence proofs for the S-Isomap and S-Isomap++ algorithms. In Section~\ref{sec:methodology}, we present the GP-Isomap algorithm and discuss different aspects of it,  while in Section ~\ref{sec:theory}, we demonstrate the equivalence between the predictions of GP-Isomap and S-Isomap using theoretical results discussed later in Appendix~\ref{app:lemmas}. We
demonstrate the performance of our proposed algorithm on both synthetic and real-world data sets
in Section~\ref{sec:results_analysis} as well as analyze and discuss the results.

%\vc{Suchismit: Can you add more here to help readers navigate the manuscript?}

\section{Related Works}\label{sec:related_work}
Processing data streams efficiently using standard approaches is challenging in general, given streams require real-time processing and cannot be stored permanently. Any form of analysis, including detecting concept drift requires adequate summarization which can deal with the inherent constraints and that can approximate the characteristics of the stream well. Sampling based strategies include {\em random sampling}~\citep{vitter1985,chaudhuri1999} as well as decision-tree based approaches~\citep{domingos2000} which have been used in this context. To identify concept drift, maintaining statistical summaries on a streaming ``window'' is a typical strategy~\citep{alon1999,jagadish1998,datar2002}. However, none of these are applicable in the setting of learning a latent representation from the data, e.g., manifolds, in the presence of changes in the stream distribution.

We discuss limitations of existing incremental and streaming solutions that have been specifically developed in the context of manifold learning, specifically in the context of the Isomap algorithm in Section~\ref{sec:preliminaries}. Coupling Isomap with GP Regression (GPR) has been explored in the past~\citep[see][]{choi2004,xing2015}, though not in the context of streaming data. For instance, a Mercer kernel-based Isomap technique has been proposed by~\citet{choi2004}. Similarly~\citet{xing2015} presented an emulator pipeline using Isomap to determine a low-dimensional representation, whose output is fed to a GPR model.
~\citet{feragen2015} provided some theoretic results for geodesic distance kernels in their recent work and~\citet{chapelle1999} demonstrated the usage of heavy-tailed RBF kernels for the image classification task. The intuition to use GPR for detecting concept drift is novel even though the Bayesian non-parametric approach~\citep{barkan2016}, primarily intended for anomaly detection, comes close to our work in a single manifold setting. However, their choice of the Euclidean distance (in original $\mathbb{R}^D$ space) based kernel for its covariance matrix, can result in high Procrustes error, as shown in Figure~\ref{fig:pe_comparison}. Additionally, their approach does not scale, given it does not use any approximation to be able to process the new streaming points ``cheaply''.

We also note that a family of GP based non-spectral\footnote{An equivalence between GPLVM and Kernel Principal Component Analysis (KPCA) has been shown in the literature~\cite{li2016}.} non-linear dimensionality reduction methods exist, called {\em Gaussian Process Latent Variable Model} (GPLVM)~\cite{lawrence2003} and its variants~\cite{titsias2010,li2016}. GPLVM assumes that the high-dimensional observations are generated from the corresponding low-dimensional representations, using a GP prior. The latent low-dimensional representations are then inferred by maximizing the marginalized log-likelihood of the observed data, which is an optimization problem with $n$ unknown $d$-dimensional vectors, where $d$ is the length of the low-dimensional representation. In contrast, the GP-Isomap algorithm assumes that the low-dimensional representations are generated from the corresponding high-dimensional data, using a manifold-specific kernel matrix. \section{Problem Statement and Preliminaries}\label{sec:preliminaries}
We first formulate the NLDR problem and provide background on Isomap and discuss its out-of-sample and streaming extensions~\citep{bengio2004,schoeneman2017,mahapatra2017,law2006}. Additionally, we provide brief introduction to Gaussian Process (GP) analysis.
\subsection{Non-linear Dimensionality Reduction}\label{ssec:NLDR}
Given high-dimensional data ${\bf Y} = \{ {\bf y}_i \}_{i = 1 \ldots n}$, where ${\bf y}_i \in \mathbb{R}^D$, the NLDR problem is concerned with finding its corresponding low-dimensional representation ${\bf X} = \{ {\bf x}_i \}_{i = 1 \ldots n}$, such that ${\bf x}_i \in \mathbb{R}^d$, where ${d} \ll {D}$.

NLDR methods assume that the data lies along a low-dimensional manifold embedded in a high-dimensional space, and exploit the {\em global} (Isomap~\cite{tenenbaum2000}, Minimum Volume Embedding~\cite{weinberger2005}) or {\em local} (LLE~\cite{roweis2000}, Laplacian Eigenmaps~\cite{belkin2002}) properties of the manifold to map each ${\bf y}_i$ to its corresponding ${\bf x}_i$.

The Isomap algorithm~\citep{tenenbaum2000} maps each ${\bf y}_i$ to its low-dimensional representation ${\bf x}_i$ in such a way that the geodesic distance along the manifold between any two points, ${\bf y}_i$ and ${\bf y}_j$, is as close to the Euclidean distance between ${\bf x}_i$ and ${\bf x}_j$ as possible. The geodesic distance is approximated by computing the shortest path between the two points using the $k$-nearest neighbor graph\footnote{Actually, there are two variants of Isomap. The former employs a ${\bf K}$-rule to define the neighborhood $\mathcal{N}({\bf y})$ for each point ${\bf y} \in {\bf Y}$ i.e. it considers the $k$-nearest neighbors of each point ${\bf y}$ to be its neighborhood $\mathcal{N}({\bf y})$. The second variant employs a ${\bm \epsilon}$-rule to define the neighborhood $\mathcal{N}({\bf y})$ of ${\bf y}$ i.e. it considers all points which are within a radius of ${\bm \epsilon}$ to be in its neighborhood $\mathcal{N}({\bf y})$. We observe that there is a direct one-to-one relationship between the two rules with regards to computing the neighborhood $\mathcal{N}({\bf y})$ for all ${\bf y} \in {\bf Y}$.} and is stored in the geodesic distance matrix ${\bf G}= \{ {\bf g}_{i,j} \}_{1 \leq i, j \leq n}$, where ${\bf g}_{i,j}$ is the geodesic distance between the points ${\bf y}_i$ and ${\bf y}_j$. $\widetilde{{\bf G}}= \{ {{\bf g}^{2}_{i,j}} \}_{1 \leq i, j \leq n}$ contains squared geodesic distance values. The Isomap algorithm recovers ${\bf x}_i$ by using the classical {\em Multi Dimensional Scaling} (MDS) on $\widetilde{{\bf G}}$. Let ${\bf B}$ be the inner product matrix between different ${\bf x}_i$. ${\bf B}$ can be retrieved as ${\bf B} = -{\bf H}\widetilde{{\bf G}}{\bf H}/2$ by assuming $\sum\limits_{i=1}^{n} {\bf x}_i = 0$, where ${\bf H} = \{ {\bf h}_{i,j} \}_{1 \leq i, j \leq n}$ and ${\bf h}_{i,j} = {\bm \delta}_{i,j} - 1/{n}$, where ${\bm \delta}_{i,j}$ is the Kronecker delta. Isomap uncovers ${\bf X}$ such that ${\bf X}^T{\bf X}$ is as close to ${\bf B}$ as possible. This is achieved by setting ${\bf X} = \{ \sqrt{\bm \lambda}_1{\bf q}_1 \; \sqrt{\bm \lambda}_2{\bf q}_2 \; \ldots \; \sqrt{\bm \lambda}_d{\bf q}_d \}^T$ where ${\bm \lambda}_1, {\bm \lambda}_2 \dots {\bm \lambda}_d$ are the ${d}$ largest eigenvalues of ${\bf B}$ and ${\bf q}_1, {\bf q}_2 \dots {\bf q}_d$ are the corresponding eigenvectors.

The Isomap algorithm makes use of $\widetilde{{\bf G}}$ to approximate the pairwise Euclidean distances on the generated manifold. Isomap demonstrates good performance when the computed geodesic distances are close to Euclidean. In this scenario, the matrix ${\bf B}$ behaves like a positive semi-definite (PSD) kernel. The opposite scenario requires a modification to be made to $\widetilde{{\bf G}}$ to make it PSD. In MDS literature, this is commonly referred to as the Additive Constant Problem (ACP)~\citep{torgerson1952,cailliez1983, lingoes1971,cooper1972,choi2004}.

To measure error between the true, underlying low-dimensional representation to that uncovered by NLDR methods, {\em Procrustes analysis}~\citep{dryden2014} is typically used. Procrustes analysis involves aligning two matrices, ${\bf A}$ and ${\bf B}$, by finding the optimal translation ${\bf t}$, rotation ${\bf R}$, and scaling ${\bf s}$ that minimizes the {\em Frobenius norm} between the two aligned matrices, i.e.,:
\begin{equation}
{\bm \epsilon}\mbox{\textsubscript{Proc}}({\bf A}, {\bf B}) = \min_{{\bf R},{\bf t},{\bf s}} \Vert {\bf s}{\bf R}{\bf B} + {\bf t} - {\bf A}\Vert\mbox{\textsubscript{F}}
\end{equation}
The above optimization problem has a closed-form solution obtained by performing {\em Singular Value Decomposition} (SVD) of ${\bf A}{\bf B}^T$~\citep{dryden2014}. Consequently, one of the properties of Procrustes analysis is that ${\bm \epsilon}\mbox{\textsubscript{Proc}}({\bf A}, {\bf B}) = 0$ when ${\bf A} = {\bf s}{\bf R}{\bf B} + {\bf t}$ i.e. when one of the matrices is a scaled, translated and/or rotated version of the other, which we leverage upon in this work.
\subsection{Streaming Isomap}\label{ssec:sisomap}
Given that the Isomap algorithm has a complexity of $\mathcal{O}(n^3)$ (where $n$ = size of data) since it needs to perform Eigen Decomposition on ${\bf B}$ as described in the previous section, recomputing the manifold is computationally impractical to use in a streaming setting. Incremental techniques have been proposed in the past~\citep{law2006,schoeneman2017}, which can efficiently process the new streaming points, without affecting the quality of the embedding significantly. 

The S-Isomap algorithm relies on the assumption that a stable manifold can be learnt using only a fraction of the stream (denoted as the batch data set ${\bf\mathcal{B}}$), and the remaining part of stream (denoted as the stream data set ${\bf\mathcal{S}}$) can be mapped to the manifold in a significantly less costly manner. A convergence proof that justifies this assumption is provided in Section~\ref{sec:conv_proof}. Alternatively, this can be justified by considering the convergence of eigenvectors and eigenvalues of ${\bf B}$, as the number of points in the batch increase~\citep{shawe2003}. In particular, the bounds on the convergence error for a similar NLDR method, i.e., kernel PCA, is shown to be inversely proportional to the batch size~\citep{shawe2003}. Similar arguments can be made for Isomap, by considering the equivalence between Isomap and Kernel PCA~\citep{ham2004,bengio2004}. This relationship has also been empirically shown for multiple data sets~\citep{schoeneman2017}. 

The S-Isomap algorithm computes the low-dimensional representation for each new point i.e. ${{\bf x}_{n+1}}\in\mathbb{R}^d$ by solving a least-squares problem formulated by matching the dot product of the new point with the low-dimensional embedding of the points in the batch data set ${\bf X}$, computed using Isomap, to the normalized squared geodesic distances vector ${\bf f}$. The least-squares problem has the following form:
\begin{equation}
{\bf X^T}{{\bf x}_{n+1}} = {\bf f}
\end{equation}
where\footnote{Note that the Incremental Isomap algorithm~\citep{law2006} has a slightly different formulation where 
\begin{equation}
{{\bf f}_i} \simeq \frac{1}{2} \big(\frac{1}{n}\sum\limits_{j}{{\bf g}_{i,j}^2} - \frac{1}{{n}^2}\sum\limits_{l,m}{{\bf g}_{l,m}^2} \big) + \frac{1}{2}\big(\frac{1}{n}\sum\limits_{j}{{\bf g}_{j,n+1}^2} - {{\bf g}_{i,n+1}^2}\big)
\label{eqn:si2}
\end{equation}
The S-Isomap algorithm assumes that the data stream draws from an uniformly sampled, unimodal distribution $p({\bf x})$ and that the stream ${\bf \mathcal{S}}$ and the batch ${\bf \mathcal{B}}$ data sets get generated from $p({\bf x})$. Additionally it assumes that the manifold has stabilized i.e. $\left\vert{\bf \mathcal{B}}\right\vert = n$ is large enough. Using these assumptions in (\ref{eqn:si2}) above, we have that $\big(\frac{1}{n}\sum\limits_{j}{{\bf g}_{j,n+1}^2} - \frac{1}{{n}^2}\sum\limits_{l,m}{{\bf g}_{l,m}^2}\big) = {\bm \epsilon} \simeq 0$ i.e. the expectation of squared geodesic distances for points in the batch data set ${\bf \mathcal{B}}$ is close to those for points in the stream data set ${\bf \mathcal{S}}$. The line of reasoning for this follows from~\citet{hoeffding1994}. Thus~\eqref{eqn:si2} simplifies to~\eqref{eqn:si1}.} 
\begin{equation}
{{\bf f}_i} \simeq \frac{1}{2} \big(\frac{1}{n}\sum\limits_{j}{{\bf g}_{i,j}^2} - {{\bf g}_{i,n+1}^2}\big)
\label{eqn:si1}
\end{equation}

\subsection{Handling Multiple Manifolds}\label{ssec:multi}
In the ideal case, when manifolds are densely sampled and sufficiently separated, clustering can be performed before applying NLDR techniques~\citep{polito2002,fan2012}, by choosing an appropriate local neighborhood size so as not to include points from other manifolds and still be able to capture the local geometry of the manifold. However, if the manifolds are close or intersecting, such methods typically fail. While methods such as {\em Generalized Principal Component Analysis} (GPCA)~\cite{vidal2005} have been proposed to generalize linear methods such as PCA for a case where the data lies on multiple sub-spaces, such ideas have not been explored for non-linear methods.

The S-Isomap++~\citep{mahapatra2017} algorithm overcomes limitations of the S-Isomap algorithm and extends it to be able to deal with multiple manifolds. It uses the notion of Multi-scale SVD~\citep{little2009} to define tangent manifold planes at each data point, computed at the appropriate scale, and computes similarity in a local neighborhood. Additionally, it includes a novel manifold tangent clustering algorithm to be able to deal with the above issue of clustering manifolds which are close and in certain scenarios, intersecting, using these tangent manifold planes. After initially clustering the high-dimensional batch data set, the algorithm applies NLDR on each manifold individually and eventually ``stitches'' them together in a global ambient space by defining transformations which can map points from the individual low-dimensional manifolds to the global space. S-Isomap++ does not assume that the number of manifolds ($p$) is specified and automatically infers $p$ using its clustering mechanism\footnote{In cases of uneven/low density sampling, the clustering strategy discussed might possibly generate many small clusters. In such cases, one can try to merge clusters~\citep{comaniciu2002}, based on their affinity/closeness to make the clusters' size reasonable.}. Given that the data points lie on low-dimensional and potentially intersecting manifolds, it is evident that the standard clustering methods, such as K-Means~\citep{jain1999}, that operate on the observed data in $\mathbb{R}^{\bf D}$, will fail in correctly identifying the clusters.

However, S-Isomap++ can only detect manifolds which it encounters in its batch learning phase and not those which it might encounter in the streaming phase. Thus, S-Isomap++ ceases to ``learn'' and evolve to be able to limit the embedding error for points in the data stream, even though it has a ``stitching'' mechanism to embed individual low-dimensional manifolds, which might themselves be of different dimensions.
\subsection{Gaussian Process Regression}\label{ssec:gpr}
Let us assume that we are learning a probabilistic regression model to obtain the prediction at a given test input, ${\bf y}$, using a non-linear and latent function, $f(\boldsymbol{\cdot})$. Assuming\footnote{For vector-valued outputs, i.e., ${\bf x} \in \mathbb{R}^d$, one can consider $d$ independent models.} $d=1$, the observed output, $x$, is related to the input as:
\begin{equation}
x = f({\bf y}) + {\bm \varepsilon},\text{ where, } {\bm \varepsilon}\sim \mathcal{N}(0,{\bm \sigma}_n^2)
\label{eqn:0}
\end{equation}
Given a training set of inputs, ${\bf Y} = \{ {\bf y}_i \}_{i = 1 \ldots n}$ and corresponding outputs, ${\bf X} = \{x_i\}_{i = 1 \ldots n}$\footnote{While the typical notation for GPR models uses ${\bf X}$ as inputs and ${\bf Y}$ as outputs~\cite{williams2001}, we have reversed the notation to maintain consistency with rest of the paper.}, the Gaussian Process Regression (GPR) model assumes a GP prior on the latent function values, i.e., $f({\bf y}) \sim GP(m({\bf y}),k({\bf y},{\bf y}'))$, where $m({\bf y})$ is the mean of $f({\bf y})$ and $k({\bf y},{\bf y}')$ is the covariance between any two evaluations of $f(\boldsymbol{\cdot})$, i.e, $m({\bf y}) = \mathbb{E}[f({\bf y})]$ and $k({\bf y},{\bf y}') = \mathbb{E}[(f({\bf y}) - m({\bf y}))(f({\bf y}') - m({\bf y}'))]$. Here we use a zero-mean function ($m({\bf y}) = 0$), though other functions could be used as well. The GP prior states that any finite collection of the latent function evaluations are jointly Gaussian, i.e.,
\begin{equation}
f({\bf y}_1,{\bf y}_2,\ldots, {\bf y}_n) \sim \mathcal{N}({\bf 0}, K)
\label{eqn:0a}
\end{equation}
where the $ij^{th}$ entry of the $n \times n$ covariance matrix, $K$, is given by $k({\bf y}_i, {\bf y}_j)$. The GPR model uses~\eqref{eqn:0} and~\eqref{eqn:0a} to obtain the predictive distribution at a new test input, ${\bf y}_{n+1}$, as a Gaussian distribution with following mean and variance:

\begin{eqnarray}
\mathbb{E}[{x}_{n+1}] & = & {\bf k}_{n+1}^\top(K + {\bm \sigma}_n^2I)^{-1}{\bf X}\label{eqn:0c}\\
var[{x}_{n+1}] & = & k({\bf y}_{n+1},{\bf y}_{n+1}) - {\bf k}_{n+1}^\top(K + {\bm \sigma}_n^2I)^{-1}{\bf k}_{n+1} + {\bm \sigma}_n^2\label{eqn:0d}
\end{eqnarray}
where ${\bf k}_{n+1}$ is a $n \times 1$ vector with $i^{th}$ value as $k({\bf y}_{n+1},{\bf y}_i)$.

The kernel function, $k(\boldsymbol{\cdot})$, specifies the covariance between function values, $f({\bf y}_i)$ and $f({\bf y}_j)$, as a function of the corresponding inputs, ${\bf y}_i$ and ${\bf y}_j$. A popular choice is the {\em squared exponential} kernel, which has been used in this work:
\begin{equation}
k({\bf y}_i, {\bf y}_j) = {\bm \sigma}^2_s\exp{\left[-\frac{{\Vert{\bf y}_i-{\bf y}_j\Vert}^2}{2{\bm \ell}^2}\right]}
\label{eqn:gpr2}
\end{equation}
where ${\bm \sigma}_s^2$ is the signal variance and ${\bm \ell}$ is the length scale. The quantities ${\bm \sigma}_s^2$, ${\bm \ell}$, and ${\bm \sigma}_n^2$ (from Equation~\ref{eqn:0}) are the hyper-parameters of the model and can be estimated by maximizing the marginal log-likelihood of the observed data (${\bf Y}$ and ${\bf X}$) under the GP prior assumption.

One can observe that predictive mean, $\mathbb{E}[{\bf x}_{n+1}]$ in~\eqref{eqn:0c} can be written as an inner product, i.e.,:

\begin{equation}
\mathbb{E}[x_{n+1}] = {\bm \beta}^\top{\bf k}_{n+1}
\label{eqn:gpr7}
\end{equation}
where ${\bm \beta} = (K + {\bm \sigma}_n^2I)^{-1}{\bf X}$. We will utilize this form in subsequent proofs.

\section{Convergence Proofs for S-Isomap and S-Isomap++}\label{sec:conv_proof}

In this section, we demonstrate the convergence of the S-Isomap algorithm for a single manifold setting, subsequent to which we extend it to the multi-manifold setting i.e. for the S-Isomap++ algorithm described above.
\begin{thm}\label{thm:siso_result}
Given a uniformly sampled, uni-modal distribution from which the random batch data set $\mathcal{B} = \{ {\bf y}_i \in \mathbb{R}^{\bf D} \}_{i = 1 \ldots n}$ of the S-Isomap algorithm is derived from, there exists a threshold ${\bf n}_{0}$, such that when ${\bf n} \geq {\bf n}_{0}$, the Procrustes Error ${\bm \epsilon}$\textsubscript{Proc}$\big({\bm \tau}_{\mathcal{B}}$, ${\bm \tau}$\textsubscript{ISO}$\big)$ between ${\bm \tau}_{\mathcal{B}} = {\bm \phi}^{-1}\big(\mathcal{B}\big)$, the true underlying representation and ${\bm \tau}$\textsubscript{ISO}$= \hat{\bm \phi}^{-1}\big(\mathcal{B}\big)$, the embedding uncovered by Isomap is small (${\bm \epsilon}$\textsubscript{Proc} $\approx 0$) i.e. the batch phase of the S-Isomap algorithm converges, where ${\bm \phi}(\boldsymbol{\cdot})$ is the non-linear function which maps data points from the underlying low-dimensional ground truth representation ${\bf U}$ to $\mathcal{B}\in\mathbb{R}^{\bf D}$ and  the ground truth ${\bf U}$ originally resides in a convex $\mathbb{R}^{\bf d}$ Euclidean space.
\end{thm}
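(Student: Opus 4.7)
The plan is to establish convergence through three linked steps: (i) the estimated graph geodesic distances converge to the true manifold geodesic distances as the batch size grows, (ii) classical MDS applied to the \emph{true} squared geodesic distance matrix recovers the underlying parametrization exactly up to a rigid transformation, and (iii) a matrix perturbation argument transfers the small distance error into a small Procrustes error.

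For step (i), I would invoke the standard Isomap convergence result (Bernstein et al., of which Hoeffding-type concentration cited in the footnote is the probabilistic engine): under the uniform, uni-modal sampling assumption the sample density on the manifold is bounded below with high probability, so for every $\epsilon > 0$ there exists $n_0$ such that for all $n \geq n_0$, $|\hat{\bf g}_{i,j} - {\bf g}_{i,j}| \leq \epsilon\, {\bf g}_{i,j}$ uniformly over all pairs $(i,j)$ with probability at least $1-\delta$. For step (ii), since the manifold is assumed isometric to a convex subset of $\mathbb{R}^d$, the true matrix $\widetilde{\bf G}^{\text{true}}$ is the squared Euclidean distance matrix of $\tau_\mathcal{B}$; the double centering $-{\bf H}\widetilde{\bf G}^{\text{true}}{\bf H}/2$ therefore coincides with the centered Gram matrix $\tau_\mathcal{B}^\top\tau_\mathcal{B}$, which has rank exactly $d$, and the top-$d$ eigen-decomposition recovers $\tau_\mathcal{B}$ up to an orthogonal transformation and translation. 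By the property of Procrustes analysis highlighted in Section~\ref{ssec:nlsdr}, this yields $\bm\epsilon\text{\textsubscript{Proc}}(\tau_\mathcal{B}, \hat{\bm\tau}) = 0$, where $\hat{\bm\tau}$ is the embedding obtained from the true distances.

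For step (iii), I would use the Isomap/Kernel-PCA equivalence noted in the paper together with the perturbation bounds for Kernel PCA of Shawe-Taylor et al.\ (2003) that the paper already cites. Writing ${\bf B}_{\text{iso}} = {\bf B}_{\text{true}} + {\bf E}$, step (i) gives $\|{\bf E}\|_F = O(\epsilon)$; Weyl's inequality then bounds the perturbation of the top-$d$ eigenvalues, and the Davis--Kahan theorem bounds the perturbation of the corresponding eigenvectors. Composing these yields $\|\tau_{\text{ISO}} - \hat{\bm\tau}\|_F = O(\epsilon)$ after a suitable orthogonal alignment. Combining with step (ii) and using continuity of the Procrustes functional in its arguments, $\bm\epsilon\text{\textsubscript{Proc}}(\tau_\mathcal{B}, \tau_{\text{ISO}}) = O(\epsilon)$, which can be driven below any prescribed tolerance by choosing $n_0$ sufficiently large.

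The main obstacle is controlling the spectral gap at the $d$-th eigenvalue so that the Davis--Kahan bound does not degenerate. I would handle this by observing that ${\bf B}_{\text{true}}$ has rank exactly $d$, so the $(d{+}1)$-th eigenvalue is identically zero while the $d$-th eigenvalue is strictly positive whenever the sampled points span $\mathbb{R}^d$; the resulting gap depends only on the spread of the underlying distribution and is bounded away from zero once $n$ is large enough for the empirical spread to approximate its population counterpart. A secondary technical subtlety is that the graph-distance error bound is multiplicative in ${\bf g}_{i,j}$ rather than additive, so I would convert it to a uniform absolute bound by using the compactness of the manifold's image under $\phi^{-1}$, which the uni-modality hypothesis supplies.
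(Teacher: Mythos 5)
Your proposal is correct in substance and shares the paper's first half exactly: both arguments rest on the Bernstein et al.\ (2000) guarantee that, under a density (equivalently, sample-size) condition, the graph distances multiplicatively approximate the true manifold distances, which under the convex-isometric generative model are the Euclidean distances of the parametrization; this is precisely how the paper derives the threshold ${\bf n}_0$. Where you diverge is in the second half. The paper treats the squared graph-distance matrix as a small zero-diagonal symmetric perturbation of the true squared-distance matrix and then invokes Sibson's (1979) perturbational analysis of classical scaling, which hands over a closed-form second-order expression for the Procrustes error between the two MDS embeddings, namely $\frac{{\bm \epsilon}^2}{4}\sum_{j,k}\frac{({\bf e}_j^{T}{\bf F}{\bf e}_k)^2}{{\bm \lambda}_j+{\bm \lambda}_k}$, and concludes directly that this is small. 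You instead route through the Isomap/kernel-PCA equivalence and apply generic spectral perturbation theory (Weyl for eigenvalues, Davis--Kahan for eigenvectors). Both routes are legitimate. The paper's Sibson-based argument is shorter and avoids explicitly naming a spectral-gap condition (though the ${\bm \lambda}_j+{\bm \lambda}_k$ denominators conceal essentially the same issue you raise); your route uses heavier but more standard machinery and, to its credit, makes explicit two things the paper leaves implicit: that classical MDS applied to the \emph{true} geodesic matrix recovers ${\bm \tau}_{\mathcal{B}}$ exactly up to a similarity transformation (so the zero-perturbation baseline really has zero Procrustes error), and that the rank-$d$ structure of the true centered Gram matrix supplies the eigengap needed for eigenvector stability. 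One small misattribution: the Hoeffding concentration cited in the paper's footnote supports the simplification of the S-Isomap least-squares target, not Bernstein's geodesic approximation theorem, which is proved by a covering/density argument; this does not affect the validity of your plan.
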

\begin{proof}
Based on the setting described above, the S-Isomap algorithm acts like a {\em generative model} which is trying to learn the inverse mapping ${\bm \phi}(\boldsymbol{\cdot})^{-1}$, where the associated embedding error is the Procrustes Error ${\bm \epsilon}$\textsubscript{Proc}$\big({\bm \tau}_\mathcal{B}$, ${\bm \tau}$\textsubscript{ISO}$\big)$. 

% Let us consider the following setting. Low-dimensional ground truth ${\bf U}$ originally resides in a convex $\mathbb{R}^{\bf d}$ Euclidean space. A random subset of samples ${\bf X} = \{ {\bf x}_i \}_{i = 1 \ldots n}$ where ${\bf X}\subseteq{\bf U}$ was picked and subsequently mapped via a non-linear function ${\bm \phi}(\boldsymbol{\cdot})$ to $\mathcal{B}\in\mathbb{R}^{\bf D}$. In this {\em generative model} perspective, the S-Isomap algorithm attempts to learn the inverse mapping ${\bm \phi}(\boldsymbol{\cdot})^{-1}$, where the associated embedding error is the Procrustes Error ${\bm \epsilon}$\textsubscript{Proc}$\big({\bm \tau}_\mathcal{B}$, ${\bm \tau}$\textsubscript{ISO}$\big)$.

The proof follows from \cite{bernstein2000} who showed that in a setting, where given ${\bm \lambda}_1$, ${\bm \lambda}_2$, ${\bm \mu} > 0$ and for appropriately chosen ${\bm \epsilon} > 0$, as well as a data set ${\bf Y} = \{ {\bf y}_i \}_{i = 1 \ldots n}$ sampled from a Poisson distribution with density function ${\bm \alpha}$ which satisfies the ${\bm \delta}$-sampling condition i.e. 
\begin{equation}
{\bm \alpha} > \log({\bf V}/({\bm \mu} \widetilde{\bf V}({\bm \delta}/4)))/\widetilde{\bf V}({\bm \delta}/2)
\label{eqn:s_isomap_p1}
\end{equation} wherein the ${\bm \epsilon}$-rule is used to construct a graph ${\bf G}$ on ${\bf Y}$, the ratio between the graph based distance ${\bf d}_{G}({{\bf x}, {\bf y}})$ and the true Euclidean distance ${\bf d}_{M}({{\bf x}, {\bf y}}) \; \forall {\bf x}$, ${\bf y} \in {\bf Y}$ is bounded. More concretely, the following holds with probability at least $(1 - {\bm \mu})$ for $\forall {\bf x}$, ${\bf y} \in {\bf Y}$:
\begin{equation}
1 - {\bm \lambda}_1 \leq \frac{{\bf d}_{G}({{\bf x}, {\bf y}})}{{\bf d}_{M}({{\bf x}, {\bf y}})} \leq  1 + {\bm \lambda}_2
\label{eqn:s_isomap_p2}
\end{equation} where ${\bf V}$ is the volume of the manifold $\mathcal{M}$ and 
\begin{equation}
\widetilde{{\bf V}}({\bf r}) = \min\limits_{{\bf x}\in \mathcal{M}}\mbox{Vol}(\mathcal{B}_{\bf x}({\bf r})) = {\bm \eta}_{\bf d}{\bf r}^{\bf d}
\label{eqn:s_isomap_p3}
\end{equation} is the volume of the smallest metric ball in $\mathcal{M}$ of radius ${\bf r}$ and ${\bm \delta} >0$ is such that 
\begin{equation}
{\bm \delta} = {\bm \lambda}_2{\bm \epsilon}/4
\label{eqn:s_isomap_p4}
\end{equation}

A similar result can be derived in the scenario where ${\bf n}$ points are sampled independently from the fixed probability distribution $p({\bf y}$; ${\bm \theta})$, in which case we have :
\begin{equation}
{\bf n}\widetilde{\bm \alpha} = {\bm \alpha}
\label{eqn:s_isomap_p5}
\end{equation} where $\widetilde{\bm \alpha}$ is the probability of selecting a sample from $p({\bf y}$; ${\bm \theta})$.

Using \eqref{eqn:s_isomap_p3}, \eqref{eqn:s_isomap_p4} and \eqref{eqn:s_isomap_p5} in \eqref{eqn:s_isomap_p1}, we have :
\begin{equation}
\begin{split}
{\bf n}\widetilde{\bm \alpha} & > \log({\bf V}/({\bm \mu} \widetilde{\bf V}({\bm \delta}/4)))/\widetilde{\bf V}({\bm \delta}/2) \\
& = \big[ \log({\bf V}/{\bm \mu}{\bm \eta}_{d}{({\bm \lambda}_{2}{\bm \epsilon}/16)}^{\bf d}) \big]/{\bm \eta}_{d}{({\bm \lambda}_{2}{\bm \epsilon}/8)}^{\bf d}
\end{split}
\end{equation}
\begin{equation}
\begin{split}
{\bf n} & > (1/\widetilde{\bm \alpha})\big[ \log({\bf V}/{\bm \mu}{\bm \eta}_{d}{({\bm \lambda}_{2}{\bm \epsilon}/16)}^{\bf d}) \big]/{\bm \eta}_{d}{({\bm \lambda}_{2}{\bm \epsilon}/8)}^{\bf d}\\
& = {\bf n}_{0}
\end{split}
\label{eqn:s_isomap_p6}
\end{equation} where ${\bf n}_{0} = (1/\widetilde{\bm \alpha})\big[ \log({\bf V}/{\bm \mu}{\bm \eta}_{d}{({\bm \lambda}_{2}{\bm \epsilon}/16)}^{\bf d}) \big]/{\bm \eta}_{d}{({\bm \lambda}_{2}{\bm \epsilon}/8)}^{\bf d}$, is the condition which ensures that \eqref{eqn:s_isomap_p2} is satisfied.

Thus we have an adequate threshold for the size of the batch data set $\mathcal{B}$ which ensures \eqref{eqn:s_isomap_p6} is satisfied for the ${\bm \epsilon}$-rule. We can derive a similar threshold for the ${\bf K}$-rule, observing that there is a direct one-to-one mapping between ${\bf K}$ and ${\bm \epsilon}$. Refer to Section \ref{ssec:NLDR} for more details.

To complete the proof, we observe that~\eqref{eqn:s_isomap_p2} implies that ${\bf d}$\textsubscript{G}$({{\bf x},{\bf y}})$, the graph based distance between points ${\bf x}$, ${\bf y}\in {\bf G}$ is a perturbed version of ${\bf d}$\textsubscript{M}$({{\bf x}, {\bf y}})$, the true Euclidean distance between points ${\bf x}$ and ${\bf y}$ in the low-dimensional $\mathbb{R}^{\bf d}$ space. Let $\widetilde{\bf D}$\textsubscript{M} and $\widetilde{\bf D}$\textsubscript{G} represent the squared distance matrix corresponding to ${\bf d}$\textsubscript{M}$({{\bf x},{\bf y}})$ and ${\bf d}$\textsubscript{G}$({{\bf x},{\bf y}})$ respectively. Thus we have $\widetilde{\bf D}$\textsubscript{G}$ = \widetilde{\bf D}$\textsubscript{M} $ + $ $\Delta\widetilde{\bf D}$\textsubscript{M} where $\Delta\widetilde{\bf D}$\textsubscript{M}$ = \{ \Delta\widetilde{\bf d}$\textsubscript{M}$({\bf i}, {\bf j})\}_{1 \leq i, j \leq n}$ and $\Delta\widetilde{\bf d}$\textsubscript{M}$({\bf i}, {\bf j})$ are bounded due to \eqref{eqn:s_isomap_p2}.

In the past~\citep{sibson1979}, the robustness of MDS to small perturbations was demonstrated as follows. Let ${\bf F}$ represent the zero-diagonal symmetric matrix which perturbs the true squared distance matrix ${\bf B}$ to ${\bf B} + {\bm \Delta}{\bf B} = {\bf B} + {\bm \epsilon}{\bf F}$. Then the Procrustes Error between the embeddings uncovered by MDS for ${\bf B}$ and for ${\bf B} + {\bm \Delta}{\bf B}$ is given by $\frac{{\bm \epsilon}^2}{4} \sum\limits_{j, k} \frac{{{{\bf e}_j^T}{\bf F}{{\bf e}_k}}^2}{{\bm \lambda}_j+{\bm \lambda}_k}$, which is very small for small entries $\{ {\bf f}_{i,j} \}_{1 \leq i, j \leq n} \in {\bf F}$, $\{{\bf e}_k ({\bm \lambda}_k)\}_{k = 1 \ldots n}$ represent the eigenvectors (eigenvalues) of ${\bf B}$ and the double summation is over pairs of $({\bf j}, {\bf k}) = 1,2,\ldots ({\bf n}-1)$ but excluding those pairs $({\bf j}, {\bf k})$ wherein both entries of which lie in the range $({\bf K}+1), ({\bf K}+2), \ldots ({\bf n}-1)$, ${\bf K} = \sum\limits_{k=1}^{n}\mathcal{I}({\bm \lambda}_k > 0)$ and $\mathcal{I}(\boldsymbol{\cdot})$ is the indicator function. We substitute ${\bm \epsilon} = 1$ and replace ${\bf B}$ with $\widetilde{\bf D}$\textsubscript{M} and ${\bm \Delta}{\bf B}$ with ${\bm \Delta}\widetilde{\bf D}$\textsubscript{M} above to complete the proof, since the entries of ${\bm \Delta}\widetilde{\bf D}$\textsubscript{M} are very small i.e. $\{ 0 \leq {\bm \Delta}{\bf d}$\textsubscript{M}$(i, j) \leq {\bm \lambda}^2 \}_{1 \leq i, j \leq n}$ where ${\bm \lambda} = \max({\bm \lambda}_1, {\bm \lambda}_2)$ for small ${\bm \lambda}_1$, ${\bm \lambda}_2$, given the condition ${\bf n} > {\bf n}_{0}$ is satisfied for \eqref{eqn:s_isomap_p2}. Thus we have that the embedding uncovered by S-Isomap for a batch data set $\mathcal{B}$ where $\left\vert\mathcal{B}\right\vert = {\bf n} > {\bf n}_{0}$ converges asymptotically to their true embedding upto translation, rotation and scaling factors.
\end{proof}

\subsection{Extension to the Multi-manifold Setting}\label{ssec:conv_proof}
The above proof can be extended to show the convergence of the S-Isomap++~\citep{mahapatra2017} algorithm, described in Section~\ref{ssec:multi} as follows.
\begin{cor}\label{cor:siso_pp_result}
The batch phase of the S-Isomap++ algorithm converges under appropriate conditions.
\end{cor}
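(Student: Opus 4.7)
The plan is to reduce the multi-manifold convergence claim to repeated applications of Theorem~\ref{thm:siso_result} on each individual manifold, plus a separate argument that the final ``stitching'' step does not destroy the convergence. Concretely, suppose the batch $\mathcal{B}$ is drawn from a mixture of $K$ distributions, each supported on a smooth manifold $\mathcal{M}_k$ with its own generating density $p_k(\mathbf{y};{\bm \theta}_k)$ and intrinsic dimension $d_k$. I would formalize the ``appropriate conditions'' as: (i) the manifolds are either well-separated or their intersections are of measure zero with a bounded-curvature gap so that the Multi-scale SVD tangent-clustering subroutine of~\citet{mahapatra2017} recovers the correct cluster assignment with probability at least $1-{\bm \mu}_c$, and (ii) each cluster $\mathcal{B}_k=\mathcal{B}\cap\mathcal{M}_k$ has size $n_k\geq n_0^{(k)}$, where $n_0^{(k)}$ is the per-manifold threshold obtained from~\eqref{eqn:s_isomap_p6} with parameters $({\bm \lambda}_1,{\bm \lambda}_2,{\bm \mu},{\bm \epsilon},d_k,V_k,\widetilde{\bm \alpha}_k)$ tailored to $\mathcal{M}_k$.

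Given these conditions, I would first invoke Theorem~\ref{thm:siso_result} on each $\mathcal{B}_k$ separately. This yields, for every $k$, a local embedding ${\bm \tau}^{(k)}_{\text{ISO}}$ such that ${\bm \epsilon}_{\text{Proc}}({\bm \tau}_{\mathcal{B}_k},{\bm \tau}^{(k)}_{\text{ISO}})\approx 0$ up to per-cluster translation, rotation and scaling. A union bound over the $K$ clusters then controls the total probability of failure by $K{\bm \mu}+{\bm \mu}_c$, which can be made arbitrarily small by choosing each $n_k$ large enough. At this point each cluster has been embedded with vanishing Procrustes error into its own local coordinate system.

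Next I would handle the stitching step. Because S-Isomap++ composes its global embedding by applying affine maps (translations/rotations, and a scaling determined by anchor or overlap points) that send each local representation into a common ambient $\mathbb{R}^{d_{\max}}$, and because Procrustes error is by construction invariant under exactly the class of transformations it minimizes over, the stitched embedding's Procrustes error against the true global coordinates is bounded by the sum (or maximum) of the per-cluster errors plus an ``alignment residual'' that is itself a Procrustes fit and therefore inherits the same convergence rate in $n_k$. Making this precise amounts to writing the global embedding as a block-diagonal affine image of the local embeddings and noting that the Frobenius norm of the concatenated residual is controlled by the norms of its blocks.

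The main obstacle I expect is the clustering guarantee in condition~(i): Theorem~\ref{thm:siso_result} takes correctness of the input point set for granted, but in the multi-manifold setting a single misclassified point can create a spurious short-circuit edge in the $k$-NN graph that violates the bounds in~\eqref{eqn:s_isomap_p2}. Making the ``appropriate conditions'' quantitative thus requires translating the curvature/separation properties used by Multi-scale SVD into a probability-of-correct-clustering bound, ideally in the same Poisson-sampling framework of~\citet{bernstein2000} used for Theorem~\ref{thm:siso_result}, so that the threshold $n_0$ can be stated jointly for clustering fidelity and per-cluster geodesic approximation. The remaining steps (applying Theorem~\ref{thm:siso_result} per cluster and propagating errors through affine stitching) are largely bookkeeping once this clustering guarantee is in hand.
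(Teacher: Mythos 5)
Your core reduction is the same as the paper's: assume the batch is partitioned into per-manifold clusters, apply Theorem~\ref{thm:siso_result} to each cluster $\mathcal{B}_i$ with its own threshold $n_0^{(i)}$, and conclude per-cluster Procrustes convergence. Where you diverge is in scope and rigor, and the comparison is instructive. The paper's proof simply \emph{posits} that the tangent-clustering step yields a correct, mutually exclusive partition $\bigcup_i \mathcal{B}_i = \mathcal{B}$ and then stops at the per-cluster statement ${\bm \epsilon}_{\text{Proc}}({\bm \tau}_{\mathcal{B}_i}, {\bm \tau}_{\text{ISO}_i}) \approx 0$; it performs no union bound over clusters and says nothing about whether the subsequent stitching into a global ambient space preserves convergence. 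You add both: an explicit $K{\bm \mu} + {\bm \mu}_c$ failure-probability accounting, and an argument that the affine stitching maps fall inside the transformation class that Procrustes error quotients out, so the global embedding inherits the per-cluster rates. The stitching argument buys a genuinely stronger conclusion (convergence of the global embedding rather than of each local one), though to be fully airtight you would need to be careful that the alignment residual from the support-set least-squares fit is itself controlled, which is more than pure bookkeeping. Your flagged obstacle --- that a misclustered point can create a short-circuit edge violating \eqref{eqn:s_isomap_p2}, so the ``appropriate conditions'' must include a quantitative clustering-fidelity guarantee --- is a real gap in the paper's own proof, not in yours; the paper sidesteps it entirely by treating correct clustering as an assumption rather than something to be established. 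In short, your proposal subsumes the paper's argument and correctly identifies what would be needed to make the corollary's ``appropriate conditions'' precise.
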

\begin{proof}
Similar to the proof of the convergence for the batch phase of the S-Isomap algorithm, we consider a corresponding setting for the multi-manifold scenario now, wherein we are attempting to learn the inverse mappings ${\bm \phi}(\boldsymbol{\cdot})_{i = 1, 2, \ldots p}^{-1}$ for each of the ${\bf p}$ manifolds. The initial clustering step of the S-Isomap++ algorithm separates the samples from the batch data set $\mathcal{B}$ into different individual clusters $\mathcal{B}_{i}$, such that each cluster is mutually exclusive of the others and corresponds to one of the multiple manifolds present in the data i.e. $\bigcup\limits_{i=1}^{\bf p} \mathcal{B}_{i} = \mathcal{B}$ and $\mathcal{B}_{i}\bigcap\limits_{\mathclap{\substack{\forall i,j \\ i \neq j}}}\mathcal{B}_{j} = \phi$.

The intuition for clustering and subsequently processing each of the clusters separately is based on the setting described above that the observed data was generated by first sampling points from multiple ${\bf U}_{i = 1, 2, \ldots p}$ i.e. convex domains in $\mathbb{R}^{\bf d}$ Euclidean space\footnote{It is possible that the low-dimensional Euclidean space specific to each manifold is different i.e. ${\bf U}_{i}$ is a convex domain in $\mathbb{R}^{{\bf d}_i}$ space, where ${\bf d}_i \neq {\bf d}_j$. However we can imagine a scenario where we choose a $\mathbb{R}^{\bf d}$ global space, where ${\bf d} = \sum_{i}{{\bf d}_i}$ from which the different convex ${\bf U}_{i}$ were sampled from. Additionally note that convexity is preserved by linear projections to higher dimensional spaces thus the convex domains ${\bf U}_{i = 1, 2, \ldots p}$ remain convex in this new space.} and subsequently mapping those points nonlinearly using possibly different ${\bm \phi}(\boldsymbol{\cdot})_{i = 1, 2, \ldots p}$ to $\mathcal{B}\in\mathbb{R}^{\bf D}$ space. Thus to be able to learn the different inverse mappings effectively i.e. the different ${\bm \phi}(\boldsymbol{\cdot})_{i = 1, 2, \ldots p}^{-1}$ which the S-Isomap++ algorithm strives to achieve, there is a need to be able to cluster the data appropriately.

After the initial clustering step, a similar analysis as in Theorem~\ref{thm:siso_result} provides thresholds $\exists {\bf n}_{i = 1, 2, \ldots p}$ for each of the ${\bf p}$ clusters beyond which when $\left\vert\mathcal{B}_{i}\right\vert = {\bf n} \geq {\bf n}_{i}$, the Procrustes Error ${\bm \epsilon}$\textsubscript{Proc}$\big({\bm \tau}_{\mathcal{B}_{i}}$, ${\bm \tau}$\textsubscript{ISO$_{i}$}$\big)$ between ${\bm \tau}_{\mathcal{B}_{i}} = {\bm \phi}_{i}^{-1}\big(\mathcal{B}_{i}\big)$, the true underlying representation and ${\bm \tau}$\textsubscript{ISO$_{i}$}$= \hat{\bm \phi}_{i}^{-1}\big(\mathcal{B}_{i}\big)$, the embedding uncovered by Isomap is small (${\bm \epsilon}$\textsubscript{Proc} $\approx 0$) i.e. the batch phase of the S-Isomap++ algorithm converges provided each of the ${\bf p}$ clusters $\mathcal{B}_{i = 1, 2, \ldots p}$ exceeds the appropriate threshold ${\bf n}_{i_{0}}$ (similar to Equation~\eqref{eqn:s_isomap_p6} above).
\end{proof}

The S-Isomap++ algorithm does not assume that the number of manifolds (${\bf p}$) is specified. Refer to Section \ref{ssec:multi} for more details.

\subsection{Theoretical Bounds on the Size of Batch Data Set}

\begin{figure}[!htbp]
   \centering
   \includegraphics[scale=1.3]{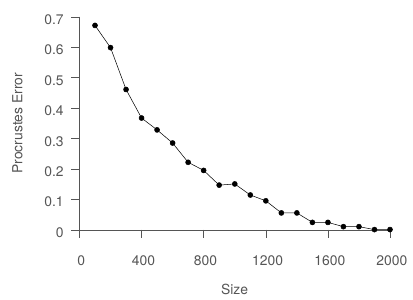}
   \caption{S-Isomap run on data samples of various size from the Euler Isometric Swiss Roll. The learned manifold is compared with the ground truth data using Procrustes error. The Procrustes error variance $\rightarrow 0$ asymptotically.}
   \label{fig:s_isomap_isometric}
\end{figure}

The threshold for the size for the batch data set $\mathcal{B}$ i.e. $\left\vert\mathcal{B}\right\vert = {\bf n} > {\bf n}_0$ beyond which the Procrustes Error converges (see Figure~\ref{fig:s_isomap_isometric}) for the synthetically generated {\em Euler Isometric Swiss Roll} for a single manifold setting is given in the Section~\ref{sec:conv_proof}. Using the result, we have : 
\[
{\bf n} > {\bf n}_0 = (\frac{1}{\widetilde{\bm \alpha}})\log((\frac{1}{\bm \mu})(\frac{\bf V}{\widetilde{\bf V}({\bm \delta}/4)}))(\frac{1}{\widetilde{\bf V}({\bm \delta}/2)})
\]

To determine the theoretical threshold ${\bf n}_0$, we substitute\footnote{$\widetilde{\bm \alpha}$ was chosen as $\textbf{1.0}$ since all points from $\mathcal{B}$ are chosen in the experiment. ${\bm \mu}$ was chosen as $\textbf{1.0}$, given $\textbf{0.0} \leq {\bm \mu} \leq \textbf{1.0}$ and thus any value chosen between $\textbf{0.0}$ and $\textbf{1.0}$ is reasonable. However we note here that ${\bm \mu}$ should be ideally chosen closer to $\textbf{0.0}$. Setting ${\bm \mu} \approx \textbf{0.0}$ gives an even higher theoretical threshold on ${\bf n}_0$, compared to the result shown in Table \ref{table:sisomap_threshold}.} the values of parameters $\widetilde{\bm \alpha}$, the probability of selecting a sample from the fixed distribution $p({\bf y}$; ${\bm \theta})$ as $\approx \textbf{1.0}$ and ${\bm \mu}$, the probability associated with the distances ratio bound as $\approx \textbf{1.0}$ and substitute parameter ${\bm \delta}$ associated with the ${\bm \delta}$-sampling condition as \textbf{0.0903}, which is estimated empirically. The value for ${\bm \eta}_d$, the volume associated with a unit ball in $\mathbb{R}^{3}$ is given by $\approx {\bm \eta}_d = \frac{4{\bm \pi}}{3} = \textbf{4.1888}$. The value for $(\frac{1}{\widetilde{\bf V}({\bm \delta}/2)})$ is given by $\frac{1}{{\bm \eta}_d*({\bm \delta}/2)*({\bm \delta}/2)*({\bm \delta}/2)} = \textbf{2593.8}$. The ratio $(\frac{\bf V}{\widetilde{\bf V}({\bm \delta}/4)})$ which is the number of balls of radius $({\bm \delta}/4)$ needed to cover the volume of manifold ${\bf V}$ is estimated empirically as $\approx \textbf{520}$. Thus the value of the theoretically estimated threshold ${\bf n}_0$ comes to $\approx (\log(520) * 2593.8) \approx \textbf{16221}$. The empirical value of threshold ${\bf n}_0$ for a single Gaussian patch (see Figure~\ref{fig:s_isomap_isometric}) is $\approx \frac{2100}{4} = \textbf{550}$. The theoretically estimated threshold on ${\bf n}_0$ is significantly larger than the empirically observed threshold on ${\bf n}_0$ in a single manifold setting for the {\em Euler Isometric Swiss Roll} data set. The theoretical prediction on ${\bf n}_0$ overestimates the empirically observed ${\bf n}_0$ for this data set i.e. we do not require a large $\mathcal{B}$ before the associated Procrustes Error starts to converge. 
\begin{table}[h!]
\centering
\begin{tabular}{| c | c | c |} 
 \hline
 & Theoretical ${\bf n}_0$ & Empirical ${\bf n}_0$\\ 
 \hline
 {\em Swiss Roll} & \textbf{16221} & \textbf{550} \\ 
 \hline
\end{tabular}
\caption{The theoretically estimated threshold ${\bf n}_0$ overestimates the empirically observed threshold ${\bf n}_0$ in a single manifold setting for the {\em Euler Isometric Swiss Roll} data set.}\label{table:sisomap_threshold}
\end{table}

\section{Methodology}\label{sec:methodology}
The proposed GP-Isomap algorithm follows a two-phase strategy (similar to the S-Isomap and S-Isomap++), where exact manifolds are learnt from an initial batch $\bf\mathcal{B}$, and subsequently a computationally inexpensive mapping procedure processes the remainder of the stream. To handle multiple manifolds, the batch data $\bf\mathcal{B}$ is first clustered via manifold tangent clustering or other standard techniques. Exact Isomap is applied on each cluster. The resulting low-dimensional data for the clusters is then ``stitched'' together to obtain the low-dimensional representation of the input data. The difference from the past methods is the mapping procedure which uses GPR to obtain the predictions for the low-dimensional mapping (see Equation~\ref{eqn:0c}). At the same time, the associated predictive variance (see Equation~\ref{eqn:0d}) is used to detect changes in the underlying distribution.

The overall GP-Isomap algorithm is outlined in Algorithm~\ref{alg:gp_isomap} and takes a batch data set, $\bf\mathcal{B}$ and the streaming data, $\bf\mathcal{S}$ as inputs, along with other parameters. The processing is split into two phases: a batch learning phase (Lines 1--15) and a streaming phase (Lines 16--32), which are described later in this section.

\begin{algorithm}[tbh]
  \caption{GP-Isomap}
  \label{alg:gp_isomap}
  \begin{multicols}{2}
  \begin{algorithmic}[1]
	\Require Batch data set: $\mathcal{B}$, Streaming data set: $\mathcal{S}$; Parameters: ${\bm \epsilon}$, ${\bm k}$, ${\bm l}$, ${\bm \lambda}$, ${\bm \sigma}_{t}$, ${\bm n}_s$
	\Ensure  $\mathcal{Y}_\mathcal{S}$: low-dimensional representation for $\mathcal{S}$
    \LineComment{Batch Phase}
    \State ${\bf \mathcal{C}}_{i=1,2 \ldots p}$ $\leftarrow$ \Call{Find\_Clusters}{$\mathcal{B}$, ${\bm \epsilon}$}
    \State ${\bm \xi}_{s}$ $\leftarrow$ $\emptyset$
    
    \For{$1\leq i\leq p$}
    \State ${\bf \mathcal{LDE}}_{i}, {\bf \mathcal{G}}_{i}$ $\leftarrow$ \Call{Isomap}{${\bf \mathcal{C}}_{i}$}
    \EndFor
    
    \For{$1\leq i\leq p$}
    \State ${\bm \phi}_{i}^{\mathcal{GP}}$ $\leftarrow$ \Call{Estimate}{${\bf \mathcal{LDE}}_{i}, {\bf \mathcal{G}}_{i}$}
    \EndFor
    
    \State ${\bm \xi}_{s}$ $\leftarrow$ $\bigcup\limits_{i=1}^{p} \bigcup\limits_{j=i+1}^{p}$ $\Call{NN}{{\bf \mathcal{C}}_{i}, {\mathcal{C}}_{j}, {\bm k}}$ $\cup$ $\Call{FN}{{\bf \mathcal{C}}_{i}, {\mathcal{C}}_{j}, {\bm l}}$
    
    \State ${\bf \mathcal{GE}}_{s}$ $\leftarrow$ \Call{MDS}{${\bm \xi}_{s}$}
    
    \For{$1\leq j\leq p$}
    
    \State ${\bf \mathcal{I}}$ $\leftarrow$ ${\bm \xi}_{s} \cap {\bf \mathcal{C}}_{j}$
    \State ${\bf \mathcal{A}}$ $\leftarrow$ \bigg[\begin{tabular}{c} ${
    \bf \mathcal{LDE}}^{\bf \mathcal{I}}_{j}$\\
    ${\bm e}^{T}$\\ \end{tabular}\bigg]
  	
  	\State ${\bf \mathcal{R}}_{i}, {t}_{i}$ $\leftarrow$ ${\bf \mathcal{GE}}_{{\bf \mathcal{I}}, s} \times {\bf \mathcal{A}}^{T} {\big( {\bf \mathcal{A}} {\bf \mathcal{A}}^{T} + {\bm \lambda} I \big)}^{-1}$
    
    \EndFor
    
    \LineComment{Streaming Phase}
    
    \State ${\bf \mathcal{S}}_{u}$ $\leftarrow$ $\emptyset$
    
    \For{$s \in {\bf \mathcal{S}}$}
    \If{$\left\vert {\bf \mathcal{S}}_{u} \right\vert$ $\geq$ ${\bm n}_{s}$}
    \State ${\bf \mathcal{Y}}_{u}$ $\leftarrow$ Re-run Batch Phase with $\mathcal{B} \leftarrow \mathcal{B} \cup {\bf \mathcal{S}}_{u}$
    \EndIf
    \For{$1\leq i\leq p$}
    	\State ${\bm \mu}_{i}, {\bm \sigma}_{i}$ $\leftarrow$ $\Call{GPR}{s, {\bf \mathcal{LDE}}_{i}, {\bf \mathcal{G}}_{i}, {\bm \phi}_{i}^{GP}}$ 
    \EndFor
    
    \State j $\leftarrow$ $\argmin_i \left\vert {\bm \sigma}_{i} \right\vert$
    %\vspace{1mm}
    \If{${\bm \sigma}_{j}$ $\leq$ ${\bm \sigma}_{t}$}
    \State $y_{s}$ $\leftarrow$ ${\bf \mathcal{R}}_{j} {\bm \mu}_j + {t}_{j}$
    \State ${\bf \mathcal{Y}}_{\bf \mathcal{S}} $ $\leftarrow$ ${\bf \mathcal{Y}}_{\bf \mathcal{S}}  \cup y_{s}$
    \Else
    \State ${\bf \mathcal{S}}_{u}$ $\leftarrow$ ${\bf \mathcal{S}}_{u} \cup {\bm s}$
    \EndIf
    \EndFor
    
    \State\Return ${\bf \mathcal{Y}}_{\bf \mathcal{S}}$
    \end{algorithmic}
    \end{multicols}
\end{algorithm}

\subsection{Kernel Function}\label{ssec:kernel}
The key innovation here is to use a manifold-specific kernel matrix in the GPR method. The matrix ${\bf B}$, which is the inner product matrix between the points in the low-dimensional space (See Section~\ref{ssec:NLDR}), could be a reasonable starting point. However, as past researchers have shown~\cite{feragen2015}, typical kernels, such as squared exponential kernel, can only be generalized to a positive definite kernel on a geodesic metric space if the space is flat. Thus ${\bf B}$ will not necessarily yield a valid positive semi-definite kernel matrix. However, a result by~\citet{cailliez1983} shows that a small positive constant, ${\bm \lambda}_\mathit{max}$, can be added to ${\bf B}$ to guarantee that it will be PSD. This constant can be calculated as the largest eigenvalue of the matrix:
\[
{\bf M} =
\begin{bmatrix}
0 & 2{\bf B} \\
-{\bf I} & -4{\bf P}
\end{bmatrix}
\]
where ${\bf P} = -{\bf H}{\bf G}{\bf H}/2$. Here, ${\bf G}$ is the geodesic distance matrix and ${\bf H} = \{ {\bf h}_{i,j} \}_{1 \leq i, j \leq n}$, ${\bf h}_{i,j} = {\bm \delta}_{i,j} - 1/{n}$, where ${\bm \delta}_{i,j}$ is the Kronecker delta.

Thus using ~\citep{cailliez1983}, we construct $\widetilde{{\bf B}}$ from ${\bf B}$ as follows
\[
\widetilde{{\bf B}} = {\bf B} + 2{\bm \lambda}_\mathit{max}{\bf P} + \frac{1}{2}{\bm \lambda}_\mathit{max}^2{\bf H}
\]
where ${\bm \lambda}_\mathit{max}$ is the largest eigenvalue of ${\bf M}$.

The proposed GP-Isomap algorithm uses a novel geodesic distance based kernel function defined as:
\begin{equation}
k({\bf y}_i,{\bf y}_j) = {\bm \sigma}^2_s\exp\left(-\frac{\widetilde{{\bf b}}_{i,j}}{2{\bm \ell}^2}\right)
\label{eqn:k0}
\end{equation}
where $\widetilde{{\bf b}}_{i,j}$ is the ${ij}^{th}$ entry of the matrix $\widetilde{{\bf B}}$, ${\bm \sigma}^2_s$ is the signal variance (whose value we fix as 1 in this work) and ${\bm \ell}$ is the length scale hyper-parameter. Thus the kernel matrix ${\bf K}$ can be written as:
\begin{equation}
{\bf K} = \exp{\left(-\frac{\widetilde{{\bf B}}}{2{\bm \ell}^2}\right)}
\label{eqn:k1}
\end{equation}
This kernel function plays a key role in using the GPR model for mapping streaming points on the learnt manifold, by measuring similarity along the low-dimensional manifold, instead of the original space ($\mathbb{R}^D$), as is typically done in GPR based solutions.

The matrix $\widetilde{{\bf B}}$, is positive semi-definite. Consequently, we note that the kernel matrix, ${\bf K}$, is positive definite (refer Equation~\ref{eqn:k2} below).

% \footnote{Actually ${\bf B}$ is not always guaranteed to be PSD.~\citet{choi2004} use a additive constant to make it PSD. Equation~\ref{eqn:k2} via exponentiation introduces the identity which functions similarly to an additive constant.}

Using Lemma~\ref{lemma1}, the novel kernel we propose can be written as
\begin{equation}
{\bf K}\big( {\bf x}, {\bf y} \big) = {\bf I} + \sum\limits_{i=1}^{d} \big[ \exp{\left(-\frac{{\bm \lambda}_i}{2{\bm \ell}^2}\right)} - 1 \big]{{\bf q}_i}{{\bf q}_i^T} = {\bf I} + {\bf Q}\widetilde{\bf \Lambda}{\bf Q ^{T}}
\label{eqn:k2}
\end{equation}
where $\widetilde{\bf \Lambda} = \begin{bmatrix} 
    \big[ \exp{\left(-\frac{{\bm \lambda}_1}{2{\bm \ell}^2}\right)} - 1 \big] & 0 & 0 \\
    0 & \ddots & 0 \\
    0 &   0     & \big[ \exp{\left(-\frac{{\bm \lambda}_d}{2{\bm \ell}^2}\right)} - 1 \big] 
    \end{bmatrix}$ and $\{{\bm \lambda}_i, {\bf q}_i \}_{i = 1 \ldots d}$ are eigenvalue/eigenvector pairs of $\widetilde{{\bf B}}$ as discussed in Section~\ref{ssec:NLDR}.

\subsection{Batch Learning}\label{ssec:batch}
The batch learning phase consists of these tasks : 
\subsubsection{Clustering.}\label{sssec:clustering}
The first step in the batch phase involves clustering of the batch data set $\bf\mathcal{B}$ into ${\bm p}$ individual clusters which represent the manifolds. In case, $\bf\mathcal{B}$ contains a single cluster, the algorithm can correctly detect it. Refer to Section \ref{ssec:multi} for more details. (Line 1)
\subsubsection{Dimension Reduction.}\label{sssec:dimred}
Subsequently, full Isomap is executed on each of the ${\bm p}$ individual clusters to get low-dimensional representations ${\bf \mathcal{LDE}}_{i=1,2 \ldots p}$ of the data points belonging to each individual cluster. (Lines 3--5)
\subsubsection{Hyper-parameter Estimation.}\label{sssec:hypest}The geodesic distance matrix for the points in the ${\bm i}$\textsuperscript{th} manifold  ${{\bf \mathcal{G}}_{i}}$ and the corresponding low-dimensional representation ${{\bf \mathcal{LDE}}_{i}}$, are fed to the GP model for each of the ${\bm p}$ manifolds, to perform hyper-parameter estimation, which outputs $\{{\bm \phi}_{i}^{GP} \}_{i = 1,2 \ldots p}$. (Lines 6--8)
\subsubsection{Learning Mapping to Global Space.}\label{sssec:learning}The low-dimensional embedding uncovered for each of the manifolds can be of different dimensionalities. Consequently, a mapping to a unified global space is needed. To learn this mapping, a support set ${\bm \xi}_{s}$ is formulated, which contains the ${\bm k}$ pairs of nearest points and ${\bm l}$ pairs of farthest points, between each pair of manifolds. Subsequently, MDS is executed on this support set ${\bm \xi}_{s}$ to uncover its low-dimensional representation ${\bf \mathcal{GE}}_{s}$. Individual scaling and translation factors $\{ {\bf \mathcal{R}}_{i}, {t}_{i} \}_{i = 1,2 \ldots p}$ are learnt via solving a least squares problem involving ${\bm \xi}_{s}$, which map points from each of the individual manifolds to the global space. (Lines 9--15)
\subsection{Stream Processing}\label{ssec:stream}
In the streaming phase, each sample ${\bm s}$ in the stream set $\bf \mathcal{S}$ is embedded using each of the $\bm p$ GP models to evaluate the prediction ${\bm \mu}_{i}$, along with the variance ${\bm \sigma}_{i}$ (Lines 22--24). The manifold with the smallest variance get chosen to embed the sample ${\bm s}$ into, using the corresponding scaling ${\bf \mathcal{R}}_{j}$ and translation factor ${t}_{j}$, provided ${ min_i} \left\vert {\bm \sigma}_{i} \right\vert$ is within the allowed threshold ${\bm \sigma}_{t}$ (Lines 25--28), otherwise sample ${\bm s}$ is added to the unassigned set ${\bf \mathcal{S}}_{u}$ (Lines 29--31). When the size of unassigned set ${\bf \mathcal{S}}_{u}$ exceeds certain threshold ${\bm n}_{s}$, we add them to the batch data set and re-learn the base manifold (Line 18--20). The assimilation of the new points in the batch maybe done more efficiently in an incremental manner.

\subsection{Complexity}\label{ssec:complexity}
The runtime complexity of our proposed algorithm is dominated by the GP regression step as well as the Isomap execution step, both of which have $\mathcal{O}(n^3)$ complexity, where $n$ is the size of the batch data set $\mathcal{B}$. This is similar to the S-Isomap and S-Isomap++ algorithms, that also have a runtime complexity of $\mathcal{O}(n^3)$. The stream processing step is $\mathcal{O}(n)$ for each incoming streaming point. The space complexity of GP-Isomap is dominated by $\mathcal{O}(n^2)$. This is because each of the samples of the stream set $\mathcal{S}$ get processed separately. Thus, the space requirement as well as runtime complexity does not grow with the size of the stream, which makes the algorithm appealing for handling high-volume streams.

\section{Theoretical Analysis}\label{sec:theory}
In this section, we first state the main result and subsequently prove it using results from lemmas stated later in Appendix~\ref{app:lemmas}. Mention how this will be extended to a multi-manifold case.
\begin{thm}\label{thm:result}
For a single manifold setting, the prediction ${\bm \tau}$\textsubscript{GP} of GP-Isomap is equivalent to the prediction ${\bm \tau}$\textsubscript{ISO} of S-Isomap i.e. the Procrustes Error ${\bm \epsilon} $\textsubscript{Proc}$\big({\bm \tau}$\textsubscript{GP}, ${\bm \tau}$\textsubscript{ISO}$\big)$ between ${\bm \tau}$\textsubscript{GP} and ${\bm \tau}$\textsubscript{ISO} is $0$.
\end{thm}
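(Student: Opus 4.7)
The strategy is to work entirely in the eigenbasis of the centered Gram matrix $\mathbf{B}$, reduce the GPR predictive mean to a closed form using the explicit spectral structure of the proposed kernel (Lemma~\ref{lemma1} / Equation~\ref{eqn:k2}), and then compare coordinate-wise with the least-squares S-Isomap prediction from Equation~\eqref{eqn:si1}. Throughout, I would invoke the stationarity / large-batch assumption already used in Section~\ref{ssec:sisomap} (and underwritten by Theorem~\ref{thm:siso_result}) so that the extended centered Gram entry $\mathbf{b}_{i,n+1}$ reduces to the S-Isomap quantity $\mathbf{f}_i$.

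First, I would expand $\mathbf{K}$ using Lemma~\ref{lemma1} as $\mathbf{K} = \mathbf{I} + \mathbf{Q}_d\widetilde{\mathbf{\Lambda}}\mathbf{Q}_d^T$, where $\{\mathbf{q}_k,\lambda_k\}_{k=1}^{d}$ are the retained eigenpairs of $\mathbf{B}$. Spectral decomposition immediately gives $(\mathbf{K}+{\bm \sigma}_n^2\mathbf{I})^{-1}$ acting as $1/\mu_k$ on $\mathrm{span}(\mathbf{q}_k)$ with $\mu_k = \exp(-\lambda_k/2{\bm \ell}^2)+{\bm \sigma}_n^2$, and as $1/(1+{\bm \sigma}_n^2)$ on the orthogonal complement. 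Since the training output in the $k$th dimension is $\mathbf{X}^{(k)}=\sqrt{\lambda_k}\mathbf{q}_k$, the GPR mean from Equation~\eqref{eqn:0c} collapses to $\mathbb{E}[x_{n+1}^{(k)}] = (\sqrt{\lambda_k}/\mu_k)\,\mathbf{k}_{n+1}^T\mathbf{q}_k$.

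Second, I would expand $\mathbf{k}_{n+1}$ to first order: $\mathbf{k}_{n+1}\approx \mathbf{1}-\mathbf{f}/(2{\bm \ell}^2)$. Because $\mathbf{B}$ is doubly-centered, $\mathbf{1}$ lies in the null space of $\mathbf{B}$, so $\mathbf{q}_k^T\mathbf{1}=0$ for $k\le d$, yielding $\mathbf{k}_{n+1}^T\mathbf{q}_k\approx -(2{\bm \ell}^2)^{-1}\mathbf{q}_k^T\mathbf{f}$. Combined with the previous step and the S-Isomap closed form $x_{n+1,\mathrm{ISO}}^{(k)}=\lambda_k^{-1/2}\mathbf{q}_k^T\mathbf{f}$ derived from the normal equations of $\mathbf{X}^T\mathbf{x}_{n+1}=\mathbf{f}$, this gives the key identity
\[
\mathbb{E}[x_{n+1}^{(k)}] \;\approx\; -\frac{\lambda_k}{2{\bm \ell}^2\mu_k}\,x_{n+1,\mathrm{ISO}}^{(k)}.
\]
In matrix form, the two predictions for the stream are therefore related by a diagonal transformation $\mathbf{D}=\mathrm{diag}(-\lambda_k/(2{\bm \ell}^2\mu_k))$ acting on the embedding axes, i.e.\ $\boldsymbol{\tau}_{\mathrm{GP}} = \mathbf{D}\,\boldsymbol{\tau}_{\mathrm{ISO}}$, with no translation component (both predictions are centered because $\sum_i\mathbf{q}_k=0$ and $\mathbf{1}^T\mathbf{f}\approx 0$).

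The final step, and the main obstacle, is translating this diagonal equivalence into a Procrustes equivalence, since the Procrustes transform is a single rotation/scalar/translation triple and a diagonal $\mathbf{D}$ is of that form only when its entries share a common magnitude. I expect the paper to close the gap through one of three routes, and I would follow whichever the appendix lemmas support: (i) absorb the coordinate-wise rescaling into the intrinsic per-eigenvector scaling ambiguity of the Isomap solution (both Isomap and Kernel PCA admit reparametrizations $\mathbf{q}_k\mapsto c_k\mathbf{q}_k$ that preserve the manifold structure), so $\mathbf{D}$ acts trivially on the equivalence class compared by Procrustes; (ii) a hyperparameter-tuning argument (fix ${\bm \ell}$, ${\bm \sigma}_n^2$) that makes $\lambda_k/\mu_k$ constant over the $d$ retained eigenvalues, at which point $\mathbf{D}=s\mathbf{R}$ with $\mathbf{R}$ a sign-flip; or (iii) a large-${\bm \ell}$ (resp.\ noise-free) limit in which the rescaling degenerates to a uniform factor. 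Resolving which of these is intended is the crux of the proof, after which the identity ${\bm \epsilon}_{\mathrm{Proc}}(\mathbf{A},s\mathbf{R}\mathbf{A}+\mathbf{t})=0$ noted in Section~\ref{ssec:nlsdr} immediately yields ${\bm \epsilon}_{\mathrm{Proc}}(\boldsymbol{\tau}_{\mathrm{GP}},\boldsymbol{\tau}_{\mathrm{ISO}})=0$.
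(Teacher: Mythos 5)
Your proposal follows essentially the same route as the paper's proof: the paper also uses the spectral form of $\big(\mathbf{K}+{\bm \sigma}_n^2\mathbf{I}\big)^{-1}$ from the appendix lemmas (your $1/\mu_k$ on $\mathrm{span}(\mathbf{q}_k)$ is exactly Lemma~\ref{lemma5}'s ${\bm \alpha}/(1+{\bm \alpha}{\bf c}_k)$), a first-order Taylor expansion of the test kernel vector valid for large ${\bm \ell}$, and a coordinate-wise comparison against~\eqref{eqn:si1}. The cosmetic differences are that the paper writes ${\bf K}_{*}$ in terms of the raw squared geodesic distances ${\bf g}_{i,n+1}^2$ and keeps the resulting constant as a per-coordinate ``translation'' rather than killing it via ${\bf q}_k^\top\mathbf{1}=0$, and that it takes the S-Isomap prediction to be $\sqrt{{\bm \lambda}_k}{\bf q}_k^\top{\bf f}$ rather than the normal-equations solution ${\bm \lambda}_k^{-1/2}{\bf q}_k^\top{\bf f}$ you derive; neither changes the structure of the argument, since both predictions end up as diagonal rescalings of $\{{\bf q}_k^\top{\bf f}\}_k$.

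The step you flag as the crux is a genuine issue, and you should know that the paper does not resolve it: after establishing that each coordinate ${\bm \tau}\textsubscript{GP}_{i}$ is a scaled and translated version of ${\bm \tau}\textsubscript{ISO}_{i}$ with coordinate-dependent factors, the paper simply asserts that ``these individual scaling ${\bf s}_i$ and translation ${\bf t}_i$ factors can be represented together by single collective scaling ${\bf s}$ and translation ${\bf t}$ factors,'' which is exactly the diagonal-versus-$s{\bf R}$ gap you identify. Of your three candidate repairs, the one most consistent with the paper's own assumptions is your route (iii): in the same large-${\bm \ell}$ regime used to justify the Taylor expansion, ${\bf c}_k=\exp(-{\bm \lambda}_k/2{\bm \ell}^2)-1\to 0$, so the paper's per-coordinate ratios ${\bm \alpha}/\big((1+{\bm \alpha}{\bf c}_k){\bm \ell}^2\big)$ tend to the common constant ${\bm \alpha}/{\bm \ell}^2$ (note this degeneration works under the paper's $\sqrt{{\bm \lambda}_k}{\bf q}_k^\top{\bf f}$ convention but not under your $\Lambda^{-1/2}$ normal-equations convention, where a residual ${\bm \lambda}_k$-dependence survives). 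So your derivation is correct and more careful than the paper's up to the point where both arguments must close the Procrustes gap; the paper closes it by assertion, and you correctly leave it marked as the unfinished step.
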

\begin{proof}
The prediction of GP-Isomap is given by \eqref{eqn:gpr7}. Using Lemma~\ref{lemma5}, we demonstrated that
\begin{equation}
{\bm \beta} = \{ \frac{{\bm \alpha}\sqrt{\bm \lambda}_1{\bf q}_1}{{1 + {\bm \alpha}{{\bf c}_1}}} \frac{{\bm \alpha}\sqrt{\bm \lambda}_2{\bf q}_2}{{1 + {\bm \alpha}{{\bf c}_2}}} \ldots \frac{{\bm \alpha}\sqrt{\bm \lambda}_d{\bf q}_d}{{1 + {\bm \alpha}{{\bf c}_d}}} \}
\label{eqn:thm1}
\end{equation}
The term ${\bf K_{*}}$ for GP-Isomap, using our novel kernel function evaluates to 
\begin{equation}
{\bf K_{*}} = \exp{\left(-\frac{{\bf G}_{*}^2}{2{\bm \ell}^2}\right)}
\label{eqn:thm2}
\end{equation}
where ${\bf G}_{*}^2$ represents the vector containing the squared geodesic distances of ${\bf x_{n+1}}$ to ${\bf X}$ containing $\{ {\bf x}_i \}_{i = 1,2 \ldots n}$.

Considering the above equation element-wise, we have that the ${\bf i}$\textsuperscript{th} term of ${\bf K_{*}}$ equates to $\exp{\left[-\frac{{\bf g}_{i,n+1}^2}{2{\bm \ell}^2}\right]}$. Using Taylor's series expansion we have, 
\begin{equation}
\exp{\left[-\frac{{\bf g}_{i,n+1}^2}{2{\bm \ell}^2}\right]} \simeq \big(1 -\frac{{\bf g}_{i,n+1}^2}{2{\bm \ell}^2}\big) \mbox{ for large } {\bm \ell}
\label{eqn:thm3}
\end{equation}

The prediction by the S-Isomap is given by \eqref{eqn:si1} as follows :-
\begin{equation}
{\bm \tau}\textsubscript{ISO} = \{ \sqrt{\bm \lambda}_1{\bf q}_1^T{\bf f} \; \sqrt{\bm \lambda}_2{\bf q}_2^T{\bf f} \; \ldots \; \sqrt{\bm \lambda}_d{\bf q}_d^T{\bf f} \}^T
\label{eqn:thm4}
\end{equation}
where ${\bf f} = \{ {\bf f}_i\}$ is as defined by \eqref{eqn:si1}.

Rewriting \eqref{eqn:si1} we have,
\begin{equation}
{{\bf f}_i} \simeq \frac{1}{2} \big({\bm \gamma} - {{\bf g}_{i,n+1}^2}\big)
\label{eqn:thm5}
\end{equation}
where ${\bm \gamma} = \big(\frac{1}{n}\sum\limits_{j}{{\bf g}_{i,j}^2} \big)$ is a constant with respect to ${\bf x}_{n+1}$, since it depends only on squared geodesic distance values associated within the batch data set $\bf \mathcal{B}$ and ${\bf x}_{n+1}$ is part of the stream data set $\bf \mathcal{S}$.

We now consider the ${1}$\textsuperscript{st} dimension of the predictions for GP-Isomap and S-Isomap only and demonstrate their equivalence via Procrustes Error. The analysis for the remaining dimensions follows a similar line of reasoning.

Thus for the ${1}$\textsuperscript{st} dimension, using~\eqref{eqn:thm5} the S-Isomap prediction is
\begin{equation}
\begin{split}
{\bm \tau}\textsubscript{ISO}_{1}& = \sqrt{\bm \lambda}_1{\bf q}_1^T{\bf f} \\
& = \sqrt{\bm \lambda}_1\sum\limits_{i=1}^{n} {{\bf q}_{1,i}} \big(\frac{1}{2} \big({\bm \gamma} - {{\bf g}_{i,n+1}^2}\big)\big)\\
& = \frac{\sqrt{\bm \lambda}_1}{2} \sum\limits_{i=1}^{n} {{\bf q}_{1,i}} \big({\bm \gamma} - {{\bf g}_{i,n+1}^2}\big)\\
\end{split}
\label{eqn:thm6}
\end{equation}
Similarly using Lemma~\ref{lemma5},~\eqref{eqn:thm2} and~\eqref{eqn:thm3}, we have that the ${\bf 1}$\textsuperscript{st} dimension for GP-Isomap prediction is given by, 
\begin{equation}
\begin{split}
{\bm \tau}\textsubscript{GP}_{1}& = \frac{{\bm \alpha}\sqrt{\bm \lambda}_1{\bf q}_1^T}{{1 + {\bm \alpha}{{\bf c}_1}}} {\bf K_{*}} \\
& = \frac{{\bm \alpha}\sqrt{\bm \lambda}_1}{1 + {\bm \alpha}{{\bf c}_1}} \sum\limits_{i=1}^{n} {{\bf q}_{1,i}} \big(1 - \frac{{{\bf g}_{i,n+1}^2}}{2{\bm \ell}^2}\big)\\
\end{split}
\label{eqn:thm7}
\end{equation}
We can observe that ${\bm \tau}\textsubscript{GP}_{1}$ is a scaled and translated version of ${\bm \tau}\textsubscript{ISO}_{1}$. Similarly for each of the dimensions (${1} \leq {i} \leq {d}$), the prediction for the GP-Isomap ${\bm \tau}\textsubscript{GP}_{i}$ can be shown to be a scaled and translated version of the prediction for the S-Isomap ${\bm \tau}\textsubscript{ISO}_{i}$. These individual scaling ${\bf s}_i$ and translation ${\bf t}_i$ factors can be represented together by single collective scaling ${\bf s}$ and translation ${\bf t}$ factors. Consequently, the Procrustes Error ${\bm \epsilon} $\textsubscript{Proc}$ \big({\bm \tau}$\textsubscript{GP}, ${\bm \tau}$\textsubscript{SI}$\big)$ is 0. (refer Section~\ref{ssec:NLDR}).
\end{proof}

\section{Results and Analysis}\label{sec:results_analysis}
In this section, we demonstrate the performance of the proposed algorithm on both synthetic and real-world data sets. In Section~\ref{ssec:synthetic_results}, we present results for synthetic data sets, whereas Section~\ref{ssec:sensor_results} contains results on benchmark sensor data sets. Our results demonstrate that:
\begin{inparaenum}[i).]
\item GP-Isomap is able to perform good quality dimension reduction on a manifold, 
\item the reduction produced by GP-Isomap is equivalent to the corresponding output of S-Ismap (or S-Isomap++), and
\item the predictive variance within GP-Isomap is able to identify changes in the underlying distribution in the data stream on all data sets considered in this paper.
\end{inparaenum}

GP-Isomap has the following hyper-parameters: $\epsilon$, $k$, $l$, $\lambda$, $\sigma_t$, $n_s$. We set $k$, $l$, $\lambda$ to have values of $16$, $1$ and $0.005$, respectively, based on past results for S-Isomap++~\citep{mahapatra2017}. We study the effect of $\sigma_t$ and $n_s$ using the different data sets listed in Sections~\ref{ssec:synthetic_results} and~\ref{ssec:sensor_results}.

%Want to show two things here - i) GP Isomap is able to learn good quality reduction on a manifold. ii) The predictive variance within GP-Isomap to identify changes in the underlying distribution in the data stream on synthetically generated data sets as well as on benchmark sensor data sets. Note that because of Theorem xx, the mappings will coincide with the s-Isomap mappings. 

%Brief discussion on various hyper-parameters for the method - $\epsilon$, $k$, $l$, $\lambda$, $\sigma_t$, $n_s$. Ask readers to refer to s-Isomap++ paper for $k$, $l$, $\lambda$. Do specify what values you ended up using. For $n_s$ we can mention that we did not quite run the experiments in that mode, but in general one could use some fractional value to determine it.
%In this section, we demonstrate the ability of the predictive variance within GP-Isomap to identify changes in the underlying distribution in the data stream on synthetically generated data sets as well as on benchmark sensor data sets.
\subsection{Results on Synthetic Data Sets}\label{ssec:synthetic_results}

Swiss roll data sets are typically used for evaluating manifold learning algorithms. To evaluate our method on concept drift, we use the Euler Isometric Swiss Roll data set~\citep{schoeneman2017} consisting of four $\mathbb{R}^{2}$ Gaussian patches having $n=2000$ points each, chosen at random, which are embedded into $\mathbb{R}^{3}$ using a non-linear function ${\bm \psi}(\cdot)$. The points for each of the Gaussian modes were divided equally into training and test sets randomly. To test incremental concept drift, we use one of the training data sets from the above data set, along with a uniform distribution of points for testing (refer to Figure~\ref{fig:expts} for details). Figures~\ref{fig:expts},~\ref{fig:s_isomap_isometric},~\ref{fig:patches_threshold},~\ref{fig:patches_pred_comp} and~\ref{fig:pe_comparison} demonstrates our results on this data set. 

\begin{figure}[!htbp]
  \centering
  \includegraphics[scale=0.4]{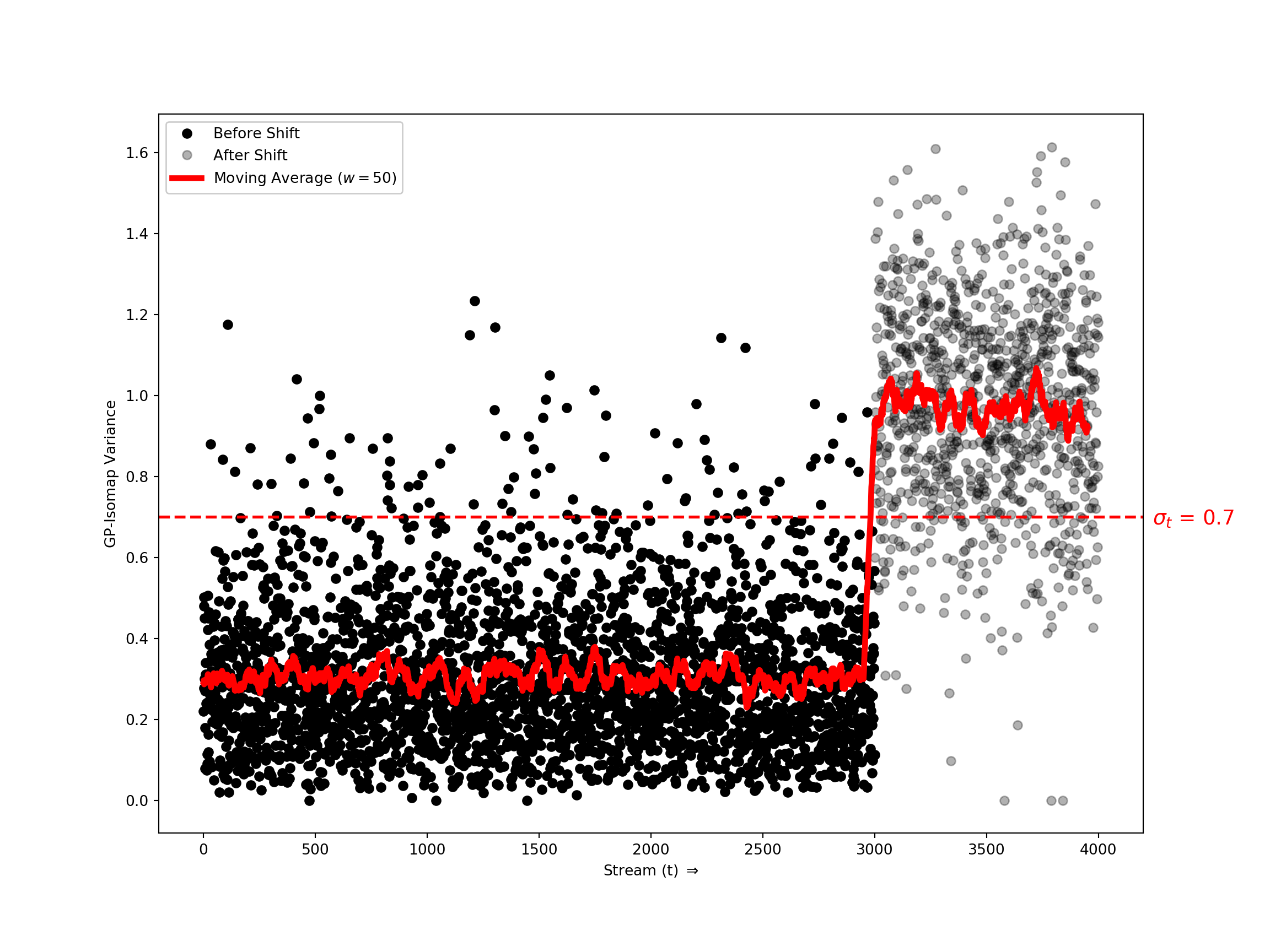}
  \caption{Using variance to detect concept drift for the Euler Isometric Swiss Roll data set. The horizontal axis represents time and the vertical axis represents variance of the stream. Initially, when stream consists of samples generated from known modes, variance is low, later when samples from an unrecognized mode appear i.e. $t \geq 3000$, variance shoots up significantly. There is some noise in the stream which results in some instances getting wrongly classified to belong to a different mode. The optimal values of hyper-parameters $n_s$ and $\sigma_t$ were determined to be $1000$ and $0.7$ respectively for this data set.}
\label{fig:patches_threshold}
\end{figure}

\begin{figure}[!htbp]
\centering
\includegraphics[scale=0.18]{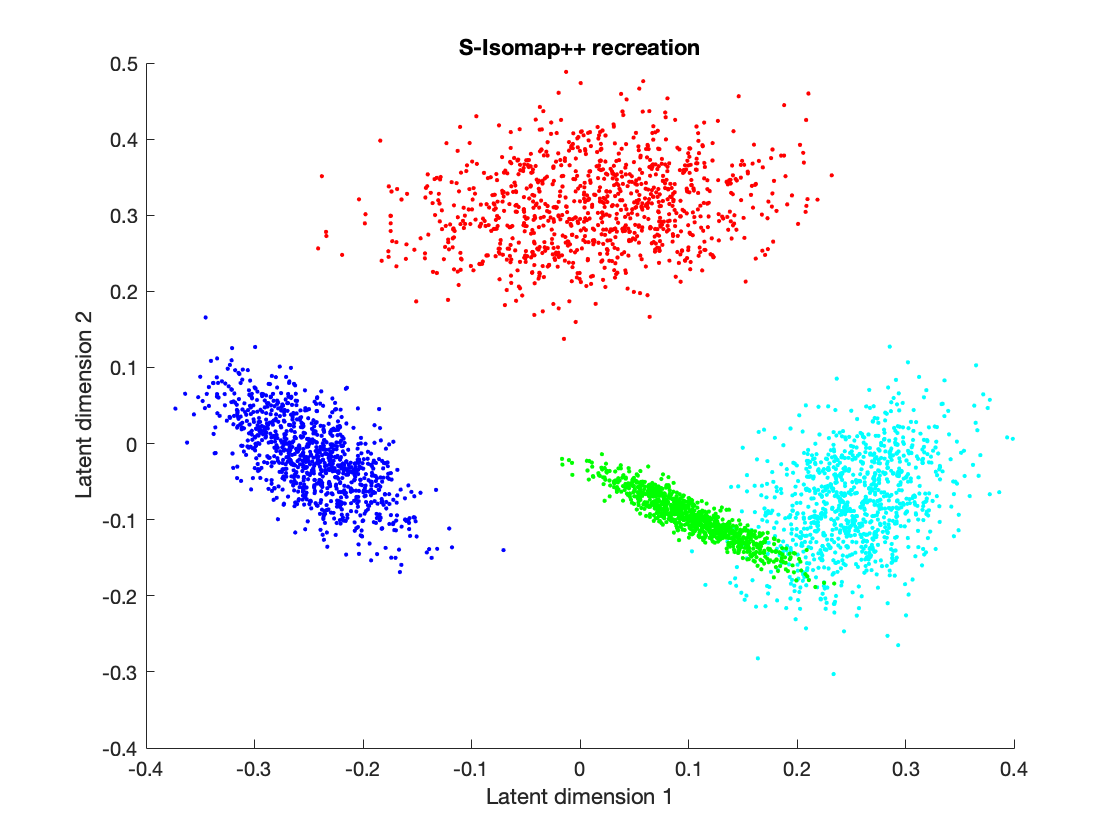}\includegraphics[scale=0.18]{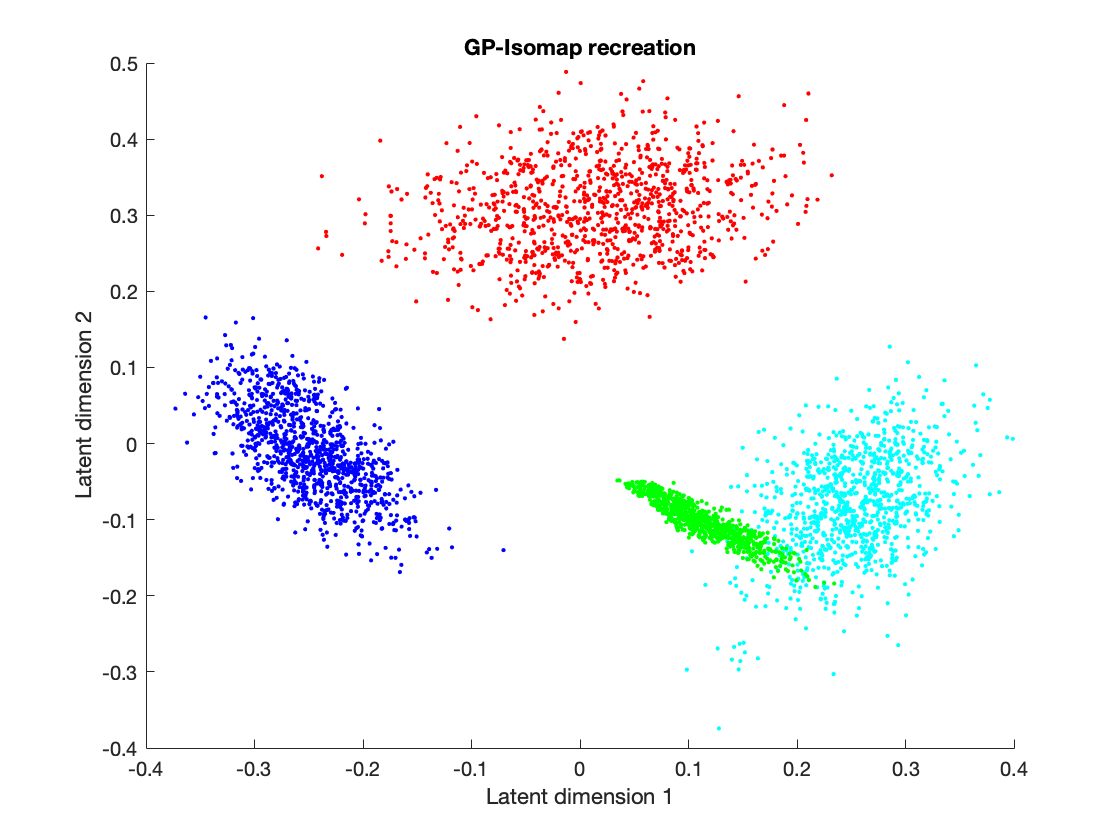}
\caption{Comparing predictions for S-Isomap++ and GP-Isomap empirically for the Euler Isometric Swiss Roll data set. The low-dimensional representations uncovered by each are almost similar.}
\label{fig:patches_pred_comp}
\end{figure}

\subsubsection{Gaussian patches on Isometric Swiss Roll}\label{sssec:gaussian_results}  To evaluate our method on sudden concept drift, we trained our GP-Isomap model using the first three out of four training sets of the Euler Isometric Swiss Roll data set. Subsequently we stream points randomly from the test sets from only the first three classes initially and later stream points from the test set of the fourth class, keeping track of the predictive variance all the while. Figure~\ref{fig:patches_threshold} demonstrates the sudden increase (see red line) in the variance of the stream when streaming points are from the fourth class i.e. unknown mode. Thus GP-Isomap is able to detect concept drift correctly. The bottom panel of Figure~\ref{fig:expts} demonstrates the performance of S-Isomap++ on this data set. It fails to map the streaming points of the unknown mode correctly, given it had not encountered the unknown mode during the batch training phase.

To test our proposed approach for detecting incremental concept drift, we train our model using the single patch data set and subsequently observe how the variance of the stream behaves on the test streaming data set. The top panel of Figure~\ref{fig:expts} shows how gradually variance increases smoothly as the stream gradually drifts away from the Gaussian patch. This shows that GP-Isomap maps incremental drift correctly. In Section~\ref{sec:theory}, we proved the equivalence between the prediction of S-Isomap with that of GP-Isomap, using our novel kernel. In Figure~\ref{fig:pe_comparison}, we show empirically via Procrustes Error (PE) that indeed the prediction of S-Isomap matches that of GP-Isomap, irrespective of size of batch used. PE for GP-Isomap with the Euclidean distance based kernel remains high irrespective of the size of the batch, which clearly demonstrates the unsuitability of this kernel to adequately learn mappings in the low-dimensional space.

\begin{figure}[!htbp]
  \centering
  \includegraphics[scale=0.27]{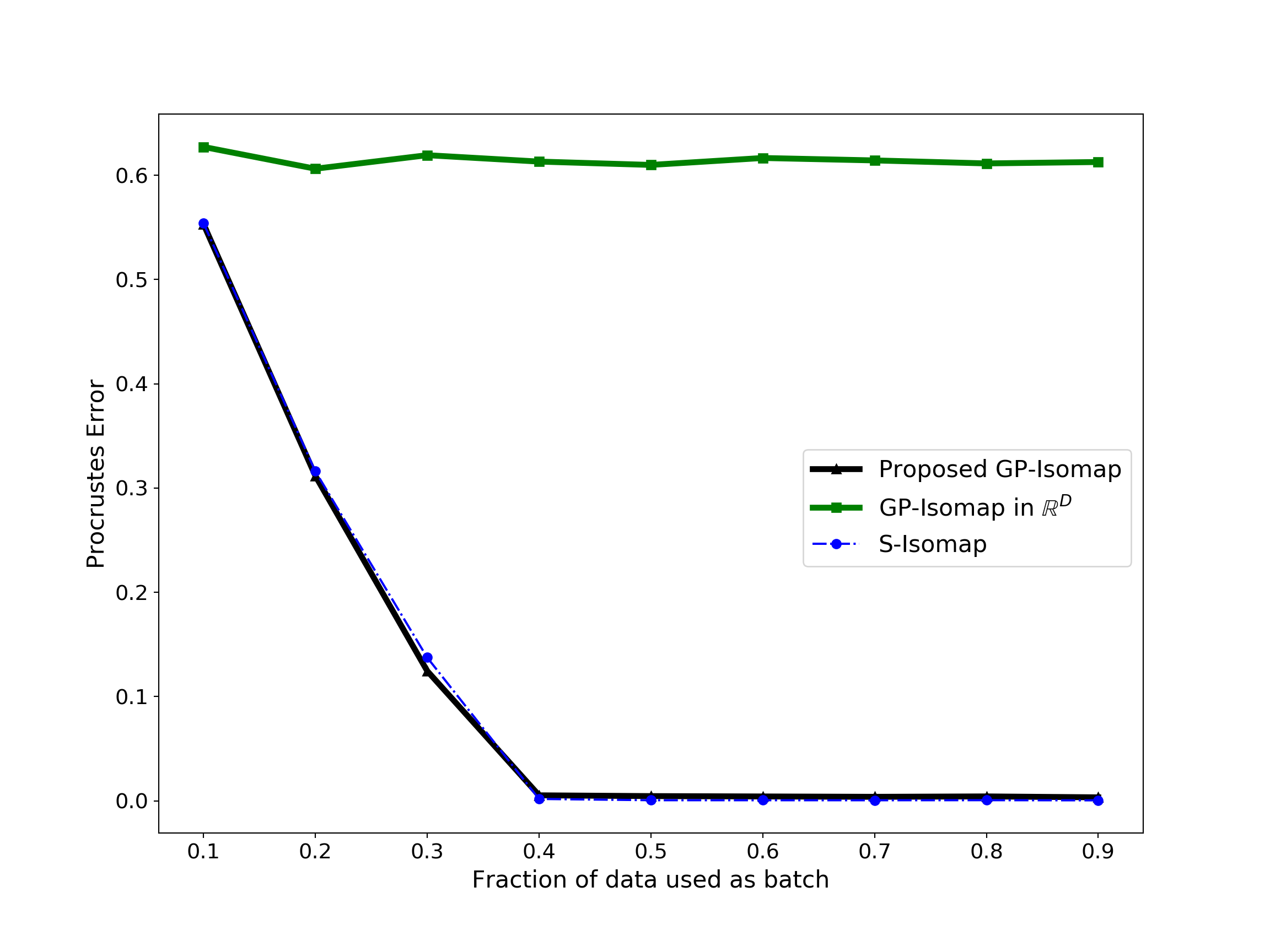}
  \caption{Procrustes error (PE) between the ground truth with a) GP-Isomap (blue line) with the geodesic distance based kernel, b) S-Isomap (dashed blue line with dots) and c) GP-Isomap (green line) using the Euclidean distance based kernel, for different fractions (${f}$) of data used in the batch $\bf\mathcal{B}$. The behavior of PE for a) closely matches that for b). However, the PE for GP-Isomap using the Euclidean distance kernel remains high irrespective of ${f}$ demonstrating its unsuitability for manifolds. }
\label{fig:pe_comparison}
\end{figure}

\subsection{Results on Sensor Data Set}\label{ssec:sensor_results}
In this section, we present results from different benchmark sensor data sets to demonstrate the efficacy of our algorithm.

\subsubsection{Results on Gas Sensor Array Drift Data Set}\label{sssec:gsad_results}

The Gas Sensor Array Drift~\citep{vergara2012} data set is a benchmark data set ($n = 13910$) available to research communities to develop strategies to dealing with concept drift and uses measurements from 16 chemical sensors used to discriminate between 6 gases (class labels) at various concentrations. We demonstrate the performance of our proposed method on this data set. 

We first removing instances which had  invalid/empty entries as feature values. 
Subsequently the data was {\em mean normalized}. Data points from the first five classes were divided into training and test sets. We train our model using the training data from four out of these five classes. While testing, we stream points randomly from the test sets of these four classes first and later stream points from the test set of the fifth class. Figures~\ref{fig:gsad_threshold} and~\ref{fig:gsad_pred_comp} demonstrate our results on this data set. From figure~\ref{fig:gsad_threshold}, we observe that our model can clearly detect concept drift due to the unknown fifth class by tracking the variance of the stream, using the running average (red line). While we have already demonstrated the equivalence between the prediction of S-Isomap with that of GP-Isomap in  Section~\ref{sec:theory}, figure~\ref{fig:gsad_pred_comp} demonstrates the equivalence empirically where we can clearly observe that the low-dimensional representations uncovered by both algorithms are similar.

\begin{figure}[!htbp]
  \centering
  \includegraphics[scale=0.4]{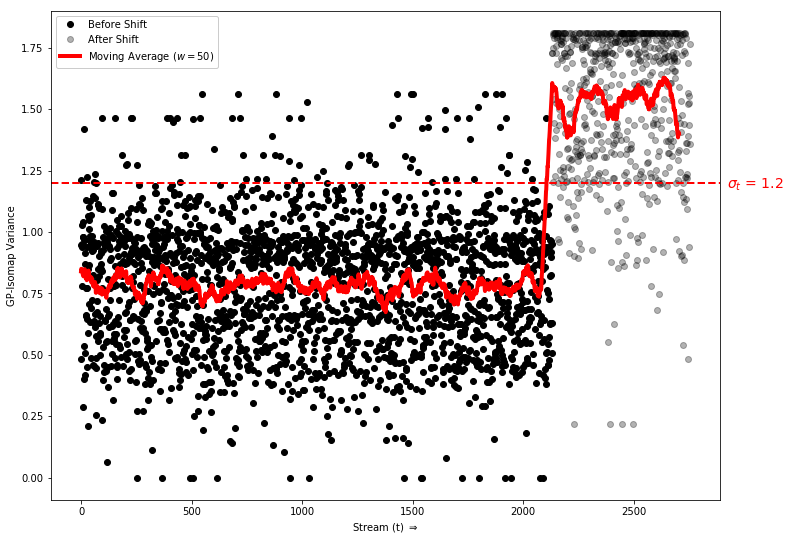}
  \caption{Using variance to identify concept drift for the Gas Sensor Array Drift data set. Similar to Figure~\ref{fig:patches_threshold}, the introduction of points from an unknown mode in the stream results in variance increasing drastically as demonstrated by the mean (red line). The spread of variances for points from known modes ($t \precsim 2000$) is also smaller, compared to the spread for the points from the unknown mode ($t \succsim 2000$). Noise results in some instances getting mis-classified. The optimal values of hyper-parameters $n_s$ and $\sigma_t$ were determined to be $412$ and $1.2$ respectively for this data set.}
\label{fig:gsad_threshold}
\end{figure}
\begin{figure}[!htbp]
    \centering
    \begin{subfigure}[t]{.49\textwidth}
        \centering
        \includegraphics[width=\textwidth]{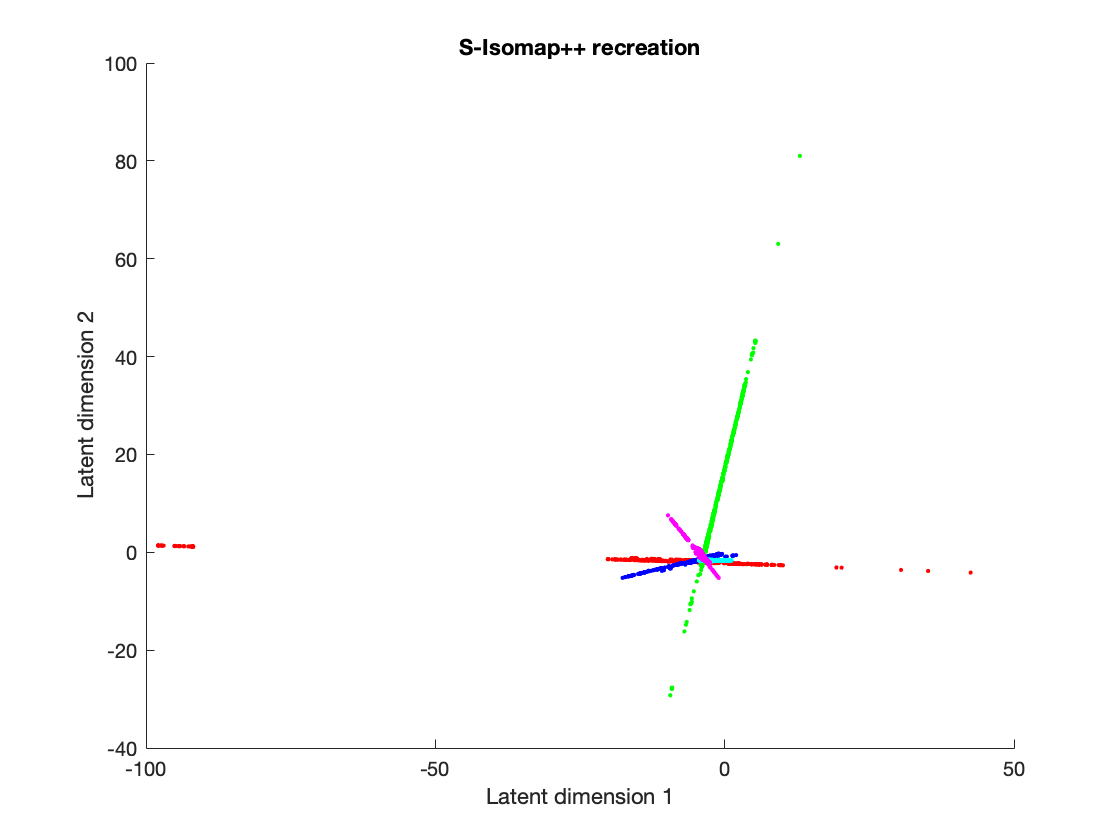}
        \caption{S-Isomap recreation}
    \end{subfigure}
    \hfill
    \begin{subfigure}[t]{.49\textwidth}
        \centering
        \includegraphics[width=\textwidth]{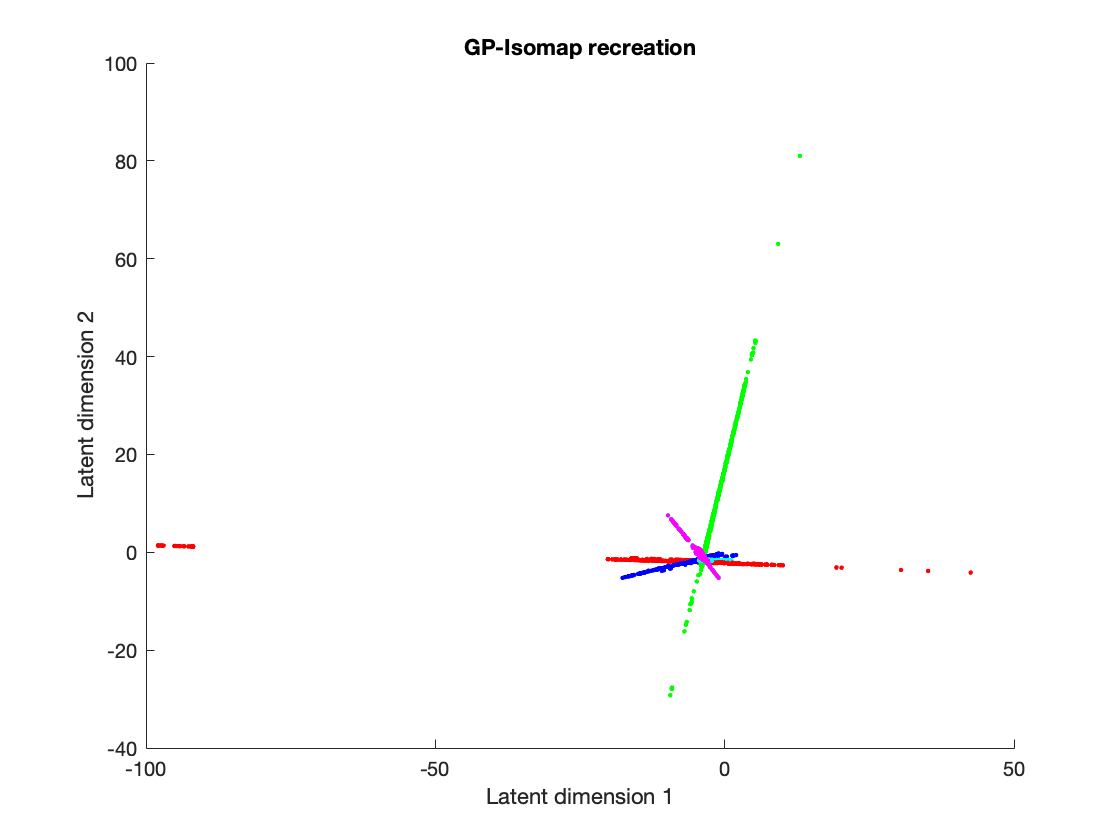}
        \caption{GP-Isomap recreation}
    \end{subfigure}
    \caption{Comparing predictions for S-Isomap++ and GP-Isomap empirically for the Gas Sensor Array Drift data set. We observe that the low-dimensional representations uncovered by both algorithms are equivalent.}
    \label{fig:gsad_pred_comp}
\end{figure}

\subsubsection{Results on Human Activity Recognition (HAR) Data Set}\label{sssec:har_results}

The Human Activity Recognition~\citep{velloso2013} data set consists of multiple data sets which are focused on discriminating between different activities, i.e. to predict which activity was performed at a specific point in time. In this work, we focused on the Weight Lifting Exercises (WLE) data set ($n = 39242$) which investigates how well an activity was performed by the wearer of different sensor devices. The WLE data set consists of six young health participants who performed one set of 10 repetitions of the Unilateral Dumbbell Biceps Curl in five different fashions: exactly according to the specification (Class A), throwing the elbows to the front (Class B), lifting the dumbbell only halfway (Class C), lowering the dumbbell only halfway (Class D) and throwing the hips to the front (Class E). Class A corresponds to the specified execution of the exercise, while the other 4 classes correspond to common mistakes.

The data set was cleaned i.e. instances with invalid/empty entries were removed. Subsequently the data points from the different classes were {\em mean normalized} and divided into training and test sets. Figures~\ref{fig:wle_threshold} and~\ref{fig:wle_pred_comp} demonstrate our results on this data set. While figure~\ref{fig:wle_threshold} demonstrates the concept drift phenomenon adequately, figure~\ref{fig:wle_pred_comp} compares the predictions for the S-Isomap++ and GP-Isomap algorithms empirically on this data set. In figure~\ref{fig:wle_threshold}, similar to the methodology we used earlier to detect concept drift, we initially trained our algorithm using instances from the latter four classes only, whereas during the streaming phase we randomly selected instances from the streaming set of these four classes first and later streamed points from the first class, keeping track of the predictive variance all the while. Figure~\ref{fig:wle_pred_comp} demonstrates the equivalence between the output of the S-Isomap++ and GP-Isomap algorithms empirically. 

\begin{figure}[!htbp]
  \centering
  \includegraphics[scale=0.4]{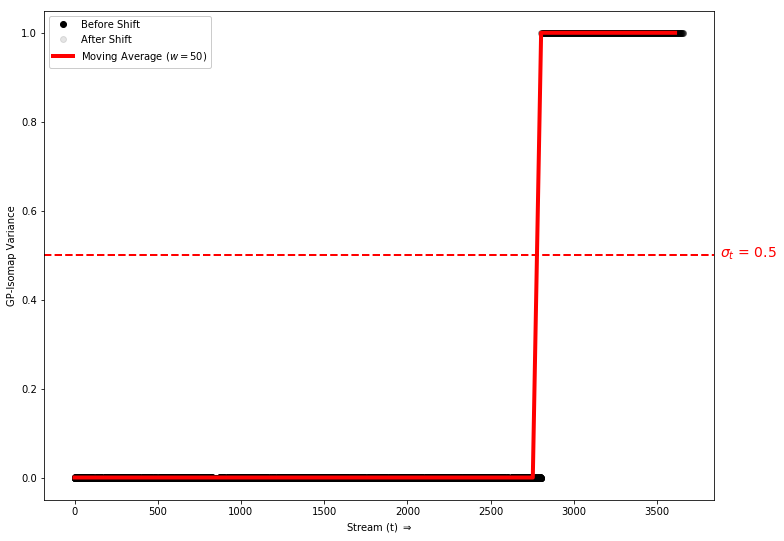}
  \caption{Using variance to detect concept drift using the Human Activity Recognition data set. The horizontal axis represents time and the vertical axis represents variance of the stream. Initially, when stream consists of samples generated from known modes, the stream variance is low, later when samples from an unrecognized mode appear i.e. $t \succsim 2500$, variance shoots up drastically as demonstrated by the mean (red line). The variance was well-behaved for this experiment. The optimal values of hyper-parameters $n_s$ and $\sigma_t$ were determined to be $855$ and $0.5$ respectively for this data set.}
\label{fig:wle_threshold}
\end{figure}
\begin{figure}[!htbp]
    \centering
    \begin{subfigure}[t]{.48\textwidth}
        \centering
        \includegraphics[scale=0.45]{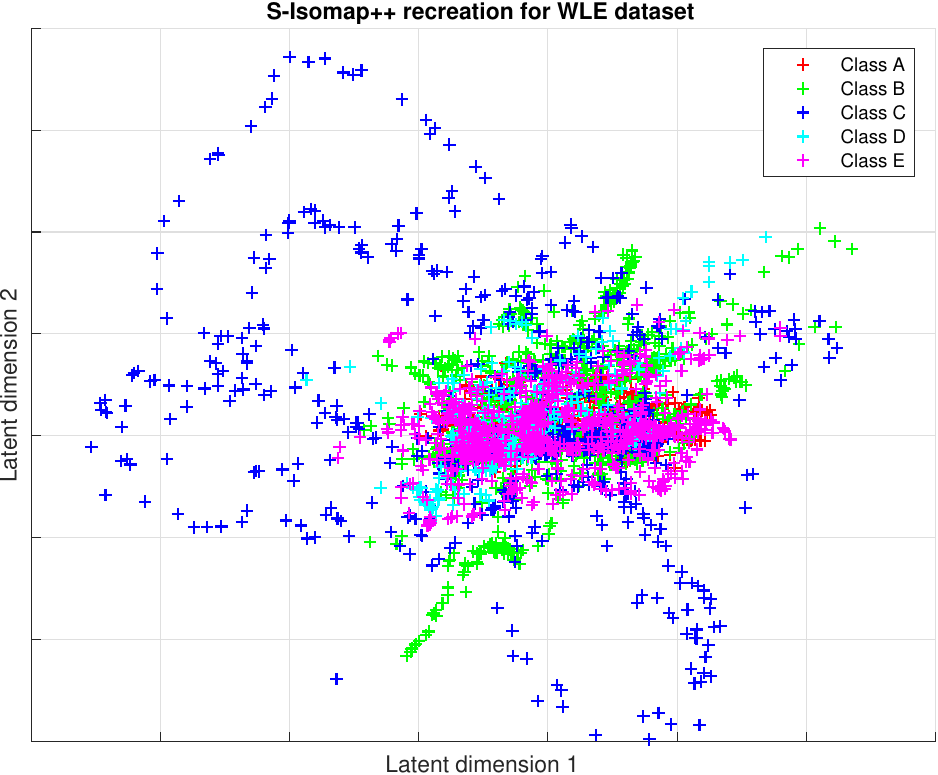}
        \caption{S-Isomap recreation}
    \end{subfigure}
    \hfill
    \begin{subfigure}[t]{.48\textwidth}
        \centering
        \includegraphics[scale=0.45]{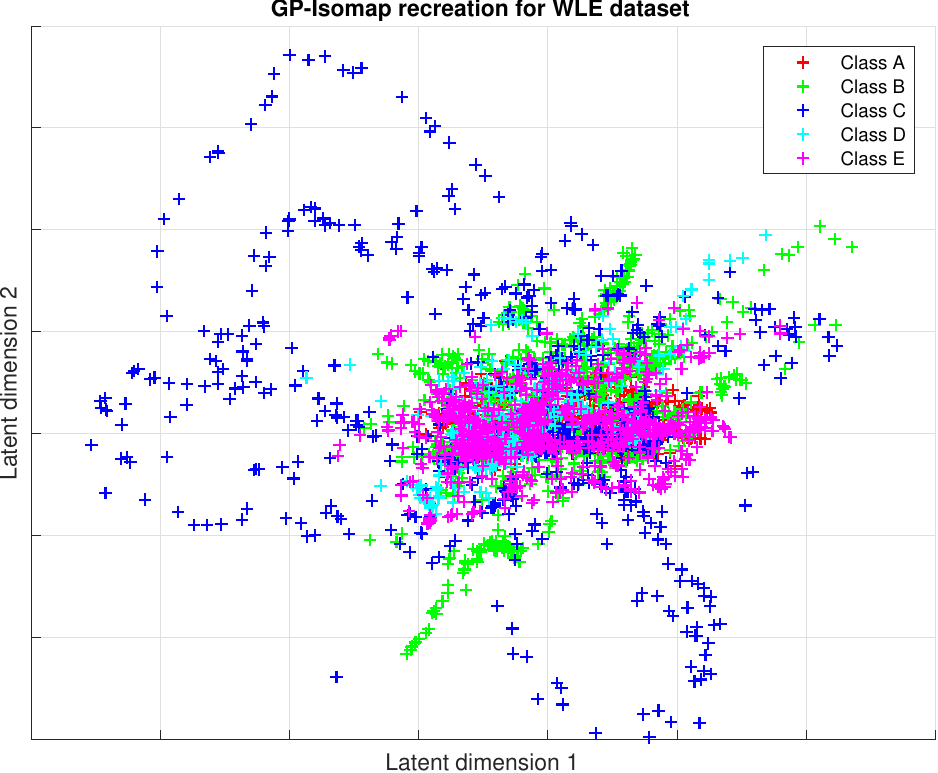}
        \caption{GP-Isomap recreation}
    \end{subfigure}
    \caption{Comparing predictions for S-Isomap++ and GP-Isomap empirically for the Human Activity Recognition data set. We observe that the low-dimensional representations uncovered by both algorithms are exactly the same, given the well-behaved variance resulted in clean separation of modes.}
\label{fig:wle_pred_comp}
\end{figure}
\section{Conclusions}\label{sec:conclusion}
We have proposed a streaming Isomap algorithm (GP-Isomap) that can be used to learn non-linear low-dimensional representation of high-dimensional data arriving in a streaming fashion. We prove that using a GPR formulation to map incoming data instances onto an existing manifold is equivalent to using existing geometric strategies~\citep{schoeneman2017,mahapatra2017}. Moreover, by utilizing a small batch for exact learning of the Isomap as well as training the GPR model, the method scales linearly with the size of the stream, thereby ensuring its applicability for practical problems. Using the Bayesian inference of the GPR model allows us to estimate the variance associated with the mapping of the streaming instances. The variance is shown to be a strong indicator of changes in the underlying stream properties on a variety of data sets. By utilizing the variance, one can devise re-training strategies that can include expanding the batch data set. While in the experiments we have demonstrated the ability of GP-Isomap to detect shifts in the underlying distributions, the algorithm can also be used to detect gradual shifts, as illustrated in Figure~\ref{fig:expts}. While we have focused on Isomap algorithm in this paper, similar formulations can be applied for other NLDR methods such as LLE~\citep{roweis2000}, etc., and will be explored as future research.
\begin{acks}
This material is based in part upon work supported by the National Science Foundation under award numbers CNS - 1409551 and IIS - 1641475. Access to computing facilities were provided by University of Buffalo Center for Computational Research.
\end{acks}

%% The next two lines define the bibliography style to be used, and
%% the bibliography file.
% \bibliographystyle{ACM-Reference-Format}
% \bibliography{ref}

%%% -*-BibTeX-*-
%%% Do NOT edit. File created by BibTeX with style
%%% ACM-Reference-Format-Journals [18-Jan-2012].

%%
%% If your work has an appendix, this is the place to put it.
\appendix

\section{Supplementary Results}\label{app:lemmas}
\begin{lem}\label{lemma1}
The matrix exponential for ${\bf M}$ for rank$\big({\bf M}\big) = d$ and symmetric ${\bf M}$ is given by
\[
e^{\bf M} = {\bf I} + \sum\limits_{i=1}^{d} \big( e^{{\bm \lambda}_i} - 1 \big){{\bf q}_i}{{\bf q}_i^\top}
\] where $\{ {{\bm \lambda}}_i \}_{i = 1,2 \ldots d}$ are the $d$ largest eigenvalues of ${\bf M}$ and $\{ {\bf q}_i \}_{i = 1,2 \ldots d}$ are the corresponding eigenvectors such that ${{\bf q}_i^\top}{{\bf q}_j} = {\bm \delta}_{i,j}$.
\end{lem}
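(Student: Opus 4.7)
The plan is to exploit the spectral decomposition of $M$ together with the power series definition of the matrix exponential. Since $M$ is symmetric, it admits an orthonormal eigendecomposition $M = \sum_{i=1}^{n} \lambda_i q_i q_i^\top$, where $\{q_i\}$ form an orthonormal basis of $\mathbb{R}^n$. Because $\mathrm{rank}(M) = d$, exactly $d$ of the eigenvalues are nonzero, and after reindexing we can take these to be $\lambda_1, \ldots, \lambda_d$, while $\lambda_{d+1} = \cdots = \lambda_n = 0$. Thus $M = \sum_{i=1}^{d} \lambda_i q_i q_i^\top$.

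The key algebraic fact I would use is that the rank-one projectors $P_i := q_i q_i^\top$ are idempotent ($P_i^2 = P_i$) and mutually orthogonal ($P_i P_j = 0$ for $i \neq j$), which follows directly from $q_i^\top q_j = \delta_{i,j}$. Consequently, for every integer $k \geq 1$,
\[
M^k = \Bigl(\sum_{i=1}^d \lambda_i P_i\Bigr)^k = \sum_{i=1}^{d} \lambda_i^k P_i,
\]
a routine induction on $k$. Meanwhile $M^0 = I$, which is not captured by the sum $\sum_i P_i$ unless $d = n$; this is precisely why the identity term appears in the statement.

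Next I would plug this into the series definition $e^M = \sum_{k=0}^{\infty} M^k/k!$, split off the $k=0$ term as $I$, and interchange the two sums (justified since everything converges absolutely in any submultiplicative matrix norm):
\[
e^M = I + \sum_{k=1}^{\infty} \frac{1}{k!}\sum_{i=1}^{d} \lambda_i^k q_i q_i^\top = I + \sum_{i=1}^{d} q_i q_i^\top \sum_{k=1}^{\infty} \frac{\lambda_i^k}{k!} = I + \sum_{i=1}^{d} (e^{\lambda_i} - 1)\, q_i q_i^\top,
\]
using $\sum_{k=1}^{\infty} \lambda_i^k/k! = e^{\lambda_i} - 1$. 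This yields the claimed identity.

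I do not expect a serious obstacle here; the only subtlety worth being explicit about is why the identity term $I$ is not simply $\sum_{i=1}^n q_i q_i^\top$ collapsed into the sum, namely that the sum only ranges over the $d$ eigenvectors with nonzero eigenvalues, while $I$ expands over the full orthonormal basis. Equivalently, one can write $I = \sum_{i=1}^d q_i q_i^\top + \sum_{i=d+1}^n q_i q_i^\top$ and observe that on the $d$ nonzero directions the term $(e^{\lambda_i}-1)q_i q_i^\top$ combines with the corresponding $q_i q_i^\top$ inside $I$ to give $e^{\lambda_i} q_i q_i^\top$, while on the kernel of $M$ the exponential simply acts as the identity, consistent with $e^0 = 1$.
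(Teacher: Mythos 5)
Your proof is correct and follows essentially the same route as the paper's: spectral decomposition of the symmetric rank-$d$ matrix into $\sum_{i=1}^{d}{\bm \lambda}_i{\bf q}_i{\bf q}_i^\top$, substitution into the power series for $e^{\bf M}$, and regrouping term by term using orthonormality of the eigenvectors to obtain the coefficients $e^{{\bm \lambda}_i}-1$. Your added remarks on the idempotence and mutual orthogonality of the projectors, the justification for interchanging the sums, and the role of the identity term on the kernel of ${\bf M}$ only make explicit what the paper leaves implicit.
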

\begin{proof}
Let ${\bf M}$ be an $n \times n$ real matrix. The exponential $e^{\bf M}$ is given by
\[
e^{\bf M} = \sum\limits_{k=0}^{\infty} \frac{1}{{k}\,!} {\bf M}^{\bf k} = {\bf I} + \sum\limits_{k=1}^{\infty} \frac{1}{{k}\,!} {\bf M}^{\bf k}
\] where ${\bf I}$ is the identity. Real, symmetric ${\bf M}$ has real eigenvalues and mutually orthogonal eigenvectors i.e. ${\bf M} = \sum\limits_{i=1}^{n} {\bm \lambda}_{i}{{\bf q}_i}{{\bf q}_i^\top} \mbox{ where } \{ {\bm \lambda}_{i} \}_{i = 1 \ldots n} \mbox{ are real and } {{\bf q}_i^\top{{\bf q}_j} = {\bm \delta}_{i,j}}$. Given ${\bf M}$ has rank $d$, we have ${\bf M} = \sum\limits_{i=1}^{d} {\bm \lambda}_{i}{{\bf q}_i}{{\bf q}_i^\top}$.
\begin{eqnarray}
\begin{aligned}
e^{\bf M} & =  {\bf I} + \sum\limits_{i=1}^{\infty} \frac{1}{i\,!} {\bf M}^i \\
& =  {\bf I} + \frac{1}{1\,!} \big({{\bm \lambda}_1}{{\bf q}_1}{{\bf q}_1^\top} + {{\bm \lambda}_2}{{\bf q}_2}{{\bf q}_2^\top} + \ldots + {{\bm \lambda}_d}{{\bf q}_d}{{\bf q}_d^\top} \big) \\
& +  \frac{1}{2\,!}\big({{\bm \lambda}_1}{{\bf q}_1}{{\bf q}_1^\top} + {{\bm \lambda}_2}{{\bf q}_2}{{\bf q}_2^\top} + \ldots +{{\bm \lambda}_d}{{\bf q}_d}{{\bf q}_d^\top} \big)^2 + \ldots \\
& =  {\bf I} + \big( \frac{{{\bm \lambda}_1}}{1\,!} + \frac{{{\bm \lambda}_1^2}}{2\,!} + \ldots \big){{\bf q}_1}{{\bf q}_1^\top} + \big( \frac{{{\bm \lambda}_2}}{1\,!} + \frac{{{\bm \lambda}_2^2}}{2\,!} + \ldots \big){{\bf q}_2}{{\bf q}_2^\top} +  \ldots \\
& +  \big( \frac{{{\bm \lambda}_d}}{1\,!} + \frac{{{\bm \lambda}_d^2}}{2\,!} + \ldots \big){{\bf q}_d}{{\bf q}_d^\top}\\
& =  {\bf I} + \big( e^{{\bm \lambda}_1} - 1 \big){{\bf q}_1}{{\bf q}_1^\top} + \big( e^{{\bm \lambda}_2} - 1 \big){{\bf q}_2}{{\bf q}_2^\top} + 	\ldots + \big( e^{{\bm \lambda}_d} - 1 \big){{\bf q}_d}{{\bf q}_d^\top}\\
& =  {\bf I} + \sum\limits_{i=1}^{d} \big( e^{{\bm \lambda}_i} - 1 \big){{\bf q}_i}{{\bf q}_i^\top}
\label{eqn:a3}
\end{aligned}
\end{eqnarray}
\end{proof}
\begin{lem}\label{lemma2}
The inverse of the Gaussian kernel for rank$\big({\bf M}\big) = 1$ and symmetric ${\bf M}$ is given by
\[
{\big( {\bf K} + {{\bm \sigma}_n}^2{\bf I} \big)}^{-1} = {\bf \alpha}{\bf I} - \frac{{\bf \alpha}^2{\bf c_1}{\bf q_1}{\bf q_1^\top}}{1 + {\bf \alpha}{\bf c_1}}
\] where ${{\bf q}_1}$ is the first eigenvector of {\bf M} i.e. ${{\bf q}_1^\top}{{\bf q}_1} = 1$, ${{\bm \lambda}_1}$ is the corresponding eigenvalue and ${\bm \alpha} = \frac{1}{\big( 1 + {{\bm \sigma}_n}^2 \big)}$ and ${{\bf c}_1} = \big[ \exp{\left(-\frac{{\bm \lambda}_1}{2{\bm \ell}^2}\right)} - 1 \big]$.
\end{lem}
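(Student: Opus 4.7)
The plan is to reduce the problem to a standard rank-one update inversion. First, I would invoke Lemma~\ref{lemma1} with $d=1$ to write the kernel matrix explicitly as $\mathbf{K} = \mathbf{I} + c_1 \mathbf{q}_1 \mathbf{q}_1^\top$, where $c_1 = \exp(-\lambda_1/(2\ell^2)) - 1$. Adding the noise term then gives
\[
\mathbf{K} + \sigma_n^2 \mathbf{I} \;=\; (1 + \sigma_n^2)\mathbf{I} + c_1 \mathbf{q}_1 \mathbf{q}_1^\top \;=\; \tfrac{1}{\alpha}\mathbf{I} + c_1 \mathbf{q}_1 \mathbf{q}_1^\top,
\]
so the matrix to be inverted is a scalar multiple of the identity plus a symmetric rank-one term.

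Next I would apply the Sherman--Morrison formula with $A = \tfrac{1}{\alpha}\mathbf{I}$ (so $A^{-1} = \alpha \mathbf{I}$), and with the rank-one perturbation written as $u v^\top$ where $u = c_1 \mathbf{q}_1$ and $v = \mathbf{q}_1$. The two auxiliary quantities needed are $A^{-1} u = \alpha c_1 \mathbf{q}_1$ and $v^\top A^{-1} u = \alpha c_1 \mathbf{q}_1^\top \mathbf{q}_1 = \alpha c_1$, using the normalization $\mathbf{q}_1^\top \mathbf{q}_1 = 1$. Plugging these into Sherman--Morrison yields
\[
(\mathbf{K} + \sigma_n^2 \mathbf{I})^{-1} \;=\; \alpha \mathbf{I} \;-\; \frac{\alpha^2 c_1 \mathbf{q}_1 \mathbf{q}_1^\top}{1 + \alpha c_1},
\]
which is exactly the claimed identity.

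There is no real obstacle here; the lemma is essentially a bookkeeping result that packages Lemma~\ref{lemma1} together with the standard rank-one update formula in the specific notation used later in the paper. The only thing to be careful about is verifying that $1 + \alpha c_1 \neq 0$ so that the inverse is well defined, which holds as long as $\alpha c_1 > -1$; since $\alpha > 0$ and $c_1 = \exp(-\lambda_1/(2\ell^2)) - 1 \in (-1, 0]$, we have $\alpha c_1 > -\alpha \geq -1$, with equality only in the degenerate $\sigma_n = 0$, $\lambda_1 \to \infty$ limit. I would note this briefly to ensure the statement is non-vacuous, and then the computation above completes the proof.
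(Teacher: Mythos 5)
Your proposal is correct and follows essentially the same route as the paper: expand ${\bf K}$ via Lemma~\ref{lemma1} with $d=1$, absorb the noise term into $(1+{\bm \sigma}_n^2){\bf I}$, and apply Sherman--Morrison with $u = c_1{\bf q}_1$, $v = {\bf q}_1$. Your added check that $1 + {\bm \alpha}{\bf c}_1 \neq 0$ is a small but worthwhile addition the paper omits.
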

\begin{proof}
Using \eqref{eqn:k2} for $d = 1$, we have 
\begin{equation}
\begin{split}
{\big( {\bf K} + {{\bm \sigma}_n}^2{\bf I} \big)}^{-1}& = {\big( {\bf I} + \big[\exp{\left(-\frac{{\bm \lambda}_1}{2{\bm \ell}^2}\right)} - 1 \big]{{\bf q}_1}{{\bf q}_1^\top} + {{\bm \sigma}_n}^2{\bf I} \big)}^{-1} \\
& = {\big( \big( 1 + {{\bm \sigma}_n}^2 \big){\bf I} + \big[\exp{\left(-\frac{{\bm \lambda}_1}{2{\bm \ell}^2}\right)} - 1 \big]{{\bf q}_1}{{\bf q}_1^\top} \big)}^{-1}
\end{split}
\label{eqn:a4}
\end{equation}
Representing $\frac{1}{\big( 1 + {{\bm \sigma}_n}^2 \big)}$ as ${\bm \alpha}$ and $\big[\exp{\left(-\frac{{\bm \lambda}_1}{2{\bm \ell}^2}\right)} - 1 \big]$ as ${{\bf c}_1}$ and using $\big( 1 + {{\bm \sigma}_n}^2 \big){\bf I}$ as ${\bf A}$, ${{\bf c}_1}{{\bf q}_1}$ as ${\bf u}$ and ${{\bf q}_1}$ as ${\bf v}$ in  the Sherman-Morrison identity~\citep{press1992}, we have \begin{equation}
\begin{split}
{\big( {\bf K} + {{\bm \sigma}_n}^2{\bf I} \big)}^{-1}& = {\bm \alpha}{\bf I} - \frac{{\bm \alpha}{\bf I}{{\bf c}_1}{{\bf q}_1}{{\bf q}_1^\top}{\bm \alpha}{\bf I}}{1 + {\bm \alpha}{{\bf c}_1}}\\
& = {\bm \alpha}{\bf I} - \frac{{\bm \alpha}^2{{\bf c}_1}{{\bf q}_1}{{\bf q}_1^\top}}{1 + {\bm \alpha}{{\bf c}_1}}
\end{split}
\label{eqn:a5}
\end{equation}
\end{proof}
\begin{lem}\label{lemma3}
The inverse of the Gaussian kernel for  rank$\big({\bf M}\big) = d$ and symmetric ${\bf M}$ is given by
\[
{\big( {\bf K} + {{\bm \sigma}_n}^2{\bf I} \big)}^{-1} = {\bm \alpha}{\bf I} - {\bm \alpha}^2 \sum\limits_{i=1}^{d} \frac{{{\bf c}_i}{{\bf q}_i}{{\bf q}_i^\top}}{1 + {\bm \alpha}{{\bf c}_i}}
\] where $\{ {\bm \lambda}_i \}_{i = 1,2 \ldots d}$ are the $d$ largest eigenvalues of ${\bf M}$ and $\{ {\bf q}_i \}_{i = 1,2 \ldots d}$ are the corresponding eigenvectors such that ${{\bf q}_i^\top}{{\bf q}_j} = {\bm \delta}_{i,j}$.
\end{lem}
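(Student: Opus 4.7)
The plan is to generalize Lemma~\ref{lemma2} from the rank-one update to the rank-$d$ case by exploiting the mutual orthogonality of the eigenvectors $\{{\bf q}_i\}_{i=1\ldots d}$. First I would apply Lemma~\ref{lemma1} (in the same form already used for \eqref{eqn:k2}) to write
\[
{\bf K} + {\bm \sigma}_n^2 {\bf I} \;=\; (1+{\bm \sigma}_n^2){\bf I} + \sum_{i=1}^{d} {\bf c}_i {\bf q}_i {\bf q}_i^\top \;=\; \tfrac{1}{\bm \alpha}{\bf I} + \sum_{i=1}^{d} {\bf c}_i {\bf q}_i {\bf q}_i^\top,
\]
where ${\bf c}_i = \exp(-{\bm \lambda}_i/(2{\bm \ell}^2))-1$ and ${\bm \alpha}=1/(1+{\bm \sigma}_n^2)$, exactly as in Lemma~\ref{lemma2}.

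The most direct route is verification by action on a complete basis. Extend $\{{\bf q}_i\}_{i=1\ldots d}$ to an orthonormal basis $\{{\bf q}_i\}_{i=1\ldots n}$ of $\mathbb{R}^n$. On each ${\bf q}_j$ with $j\leq d$, the matrix ${\bf K}+{\bm \sigma}_n^2 {\bf I}$ acts by the scalar $1+{\bm \sigma}_n^2 + {\bf c}_j = 1/{\bm \alpha} + {\bf c}_j$, and on each ${\bf q}_j$ with $j>d$ it acts by $1/{\bm \alpha}$. Thus its inverse should act as $1/(1/{\bm \alpha}+{\bf c}_j) = {\bm \alpha}/(1+{\bm \alpha}{\bf c}_j)$ in the first case and as ${\bm \alpha}$ in the second. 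I would then evaluate the proposed right-hand side on the same basis and use ${\bf q}_i^\top {\bf q}_j = {\bm \delta}_{i,j}$ to collapse the sum:
\[
\Bigl({\bm \alpha}{\bf I} - {\bm \alpha}^2 \sum_{i=1}^{d} \tfrac{{\bf c}_i {\bf q}_i {\bf q}_i^\top}{1+{\bm \alpha}{\bf c}_i}\Bigr){\bf q}_j = {\bm \alpha}{\bf q}_j - \tfrac{{\bm \alpha}^2 {\bf c}_j}{1+{\bm \alpha}{\bf c}_j}{\bf q}_j = \tfrac{{\bm \alpha}}{1+{\bm \alpha}{\bf c}_j}{\bf q}_j \quad (j\leq d),
\]
while for $j>d$ all sum terms vanish and the action is ${\bm \alpha}{\bf q}_j$. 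This matches the spectral action of the desired inverse on every basis vector, which suffices.

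An equivalent alternative is an induction on $d$ with Lemma~\ref{lemma2} as base case. At step $d$, one writes ${\bf K}+{\bm \sigma}_n^2 {\bf I} = {\bf M}_{d-1} + {\bf c}_d {\bf q}_d {\bf q}_d^\top$ where ${\bf M}_{d-1}$ is the rank-$(d{-}1)$ version, applies Sherman--Morrison (as in the proof of Lemma~\ref{lemma2}), and exploits ${\bf q}_d^\top {\bf q}_i = 0$ for $i<d$ to show ${\bf M}_{d-1}^{-1}{\bf q}_d = {\bm \alpha}{\bf q}_d$, so the Sherman--Morrison correction simplifies to exactly ${\bm \alpha}^2 {\bf c}_d {\bf q}_d {\bf q}_d^\top/(1+{\bm \alpha}{\bf c}_d)$, extending the sum by one term and closing the induction.

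The only real obstacle is bookkeeping: making sure the orthogonality of the eigenvectors is used at every step so that the cross-terms in the Sherman--Morrison update (or in the basis-action verification) genuinely vanish. Once that is observed the result is essentially forced, since $({\bf K}+{\bm \sigma}_n^2 {\bf I})$ is already diagonal in the orthonormal basis $\{{\bf q}_i\}$ and the claimed formula is just the spectral inverse written back in outer-product form.
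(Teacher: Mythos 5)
Your proposal is correct, and in fact it contains the paper's argument as your ``alternative'': the paper's entire proof of this lemma is the single sentence that one applies Lemma~\ref{lemma2} iteratively, which is exactly your induction via Sherman--Morrison. Your version is more complete than the paper's, because you supply the one detail that makes the iteration actually work, namely that orthogonality gives ${\bf M}_{d-1}^{-1}{\bf q}_d = {\bm \alpha}{\bf q}_d$, so the rank-one correction at step $d$ collapses to ${\bm \alpha}^2{\bf c}_d{\bf q}_d{\bf q}_d^\top/(1+{\bm \alpha}{\bf c}_d)$ with no cross-terms contaminating the earlier summands; the paper leaves this unstated. Your primary route --- verifying the claimed inverse by its action on an orthonormal basis extending $\{{\bf q}_i\}_{i=1\ldots d}$ --- is a genuinely different and arguably cleaner argument: it avoids the iteration entirely by observing that ${\bf K}+{\bm \sigma}_n^2{\bf I}$ is already diagonal in that basis with eigenvalues $1/{\bm \alpha}+{\bf c}_j$ (for $j\leq d$) and $1/{\bm \alpha}$ (for $j>d$), so the formula is just the spectral inverse rewritten in outer-product form. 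What the induction buys is continuity with Lemma~\ref{lemma2} and its Sherman--Morrison machinery; what the spectral verification buys is a one-step proof that also explains \emph{why} the formula must hold. Either suffices.
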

\begin{proof}
Using the result of previous lemma iteratively, we get the required result \begin{equation}
{\big( {\bf K} + {{\bm \sigma}_n}^2{\bf I} \big)}^{-1} = {\bm \alpha}{\bf I} - {\bm \alpha}^2 \sum\limits_{i=1}^{d} \frac{{{\bf c}_i}{{\bf q}_i}{{\bf q}_i^\top}}{1 + {\bm \alpha}{{\bf c}_i}}
\label{eqn:a6}
\end{equation}
where ${\bm \alpha} = \frac{1}{\big( 1 + {{\bm \sigma}_n}^2 \big)}$ and ${{\bf c}_i} = \big[ \exp{\left(-\frac{{\bm \lambda}_i}{2{\bm \ell}^2}\right)} - 1 \big]$.
\end{proof}
\begin{lem}\label{lemma4}
The solution for Gaussian Process regression system, for the scenario when rank$\big({\bf M}\big) = 1$ and for symmetric ${\bf M}$ is given by 
\[
{\big( {\bf K} + {{\bm \sigma}_n}^2{\bf I} \big)}^{-1}{\bf y} = \frac{{\bm \alpha}\sqrt{\bm \lambda}_1{\bf q}_1}{{1 + {\bm \alpha}{\bf c_1}}}
\]
\end{lem}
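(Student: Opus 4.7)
The plan is to apply Lemma~\ref{lemma2} directly and then simplify. First I would identify ${\bf y}$ as the training-output vector that GPR is being fed; in the GP-Isomap setting, with notation reversed per the paper's convention, the training outputs for the rank-1 case are the first-dimension Isomap coordinates $\sqrt{\bm \lambda}_1 {\bf q}_1$ (see Section~\ref{ssec:nlsdr}, where ${\bf X}$ is assembled from the scaled eigenvectors). So the task reduces to evaluating
\[
\bigl({\bm \alpha}{\bf I} - \frac{{\bm \alpha}^2 {\bf c}_1 {\bf q}_1 {\bf q}_1^\top}{1 + {\bm \alpha}{\bf c}_1}\bigr)\sqrt{\bm \lambda}_1 {\bf q}_1.
\]

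Next, I would distribute. The first term yields ${\bm \alpha}\sqrt{\bm \lambda}_1 {\bf q}_1$ directly. For the second term, the key simplification is the orthonormality condition ${\bf q}_1^\top {\bf q}_1 = 1$ assumed in Lemma~\ref{lemma2}, which collapses ${\bf q}_1 {\bf q}_1^\top (\sqrt{\bm \lambda}_1 {\bf q}_1)$ to $\sqrt{\bm \lambda}_1 {\bf q}_1$. The expression then becomes
\[
{\bm \alpha}\sqrt{\bm \lambda}_1 {\bf q}_1 \Bigl[1 - \frac{{\bm \alpha}{\bf c}_1}{1 + {\bm \alpha}{\bf c}_1}\Bigr] = {\bm \alpha}\sqrt{\bm \lambda}_1 {\bf q}_1 \cdot \frac{1}{1 + {\bm \alpha}{\bf c}_1},
\]
which is exactly the claimed form.

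There is no substantial obstacle: the lemma is a one-step algebraic consequence of Lemma~\ref{lemma2} combined with the eigenvector normalization. The only point worth flagging explicitly in the write-up is the implicit identification of the GPR target vector ${\bf y}$ with the rank-1 Isomap embedding $\sqrt{\bm \lambda}_1 {\bf q}_1$, since the lemma statement suppresses what ${\bf y}$ denotes; making this identification explicit clarifies why this lemma is the right building block for Theorem~\ref{thm:result}, where $\bm \beta$ is assembled dimension-by-dimension (and later generalized to rank $d$ via Lemma~\ref{lemma3}).
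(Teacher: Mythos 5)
Your proposal is correct and matches the paper's own proof essentially step for step: both identify \({\bf y}\) with \(\sqrt{\bm \lambda}_1{\bf q}_1\) from the Isomap embedding, substitute the rank-one inverse from Lemma~\ref{lemma2}, and collapse the second term via \({\bf q}_1^\top{\bf q}_1 = 1\) to obtain \({\bm \alpha}\sqrt{\bm \lambda}_1{\bf q}_1/(1+{\bm \alpha}{\bf c}_1)\). Your explicit flagging of the implicit identification of the GPR target vector is a small clarity improvement over the paper, which only gestures at it with a parenthetical reference to Section~\ref{ssec:nlsdr}.
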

\begin{proof}
Assuming the intrinsic dimensionality of the low-dimensional manifold to be $1$ implies that the inverse of the Gaussian kernel is as defined as in \eqref{eqn:a5}. ${\bf y}$ is $\sqrt{\bm \lambda}_1{\bf q}_1$ in this case (refer Section~\ref{ssec:NLDR}). Thus we have
\begin{equation}
\begin{split}
{\big( {\bf K} + {{\bm \sigma}_n}^2{\bf I} \big)}^{-1}{\bf y}& = \big( {\bm \alpha}{\bf I} - \frac{{\bm \alpha}^2{{\bf c}_1}{{\bf q}_1}{{\bf q}_1^\top}}{1 + {\bm \alpha}{{\bf c}_1}} \big) \big(\sqrt{\bm \lambda}_1{\bf q}_1\big)\\
& = {\bm \alpha}\sqrt{\bm \lambda}_1{\bf q}_1 - \frac{{\bm \alpha}^2\sqrt{\bm \lambda}_1{{\bf c}_1}{\bf q}_1}{{1 + {\bm \alpha}{{\bf c}_1}}} \\
& = \frac{{\bm \alpha}\sqrt{\bm \lambda}_1{\bf q}_1}{{1 + {\bm \alpha}{{\bf c}_1}}}
\end{split}
\label{eqn:a7}
\end{equation}
\end{proof}
\begin{lem}\label{lemma5}
The solution for Gaussian Process regression system, for the scenario when rank$\big({\bf M}\big) = d$ and for symmetric ${\bf M}$ is given by 
\[
{\big( {\bf K} + {{\bm \sigma}_n}^2{\bf I} \big)}^{-1}{\bf y} = \{ \frac{{\bm \alpha}\sqrt{\bm \lambda}_1{\bf q}_1}{{1 + {\bm \alpha}{{\bf c}_1}}} \; \frac{{\bm \alpha}\sqrt{\bm \lambda}_2{\bf q}_2}{{1 + {\bm \alpha}{{\bf c}_2}}} \; \ldots \; \frac{{\bm \alpha}\sqrt{\bm \lambda}_d{\bf q}_d}{{1 + {\bm \alpha}{{\bf c}_d}}} \}
\]
\end{lem}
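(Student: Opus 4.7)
The plan is to derive Lemma~\ref{lemma5} as a direct consequence of Lemma~\ref{lemma3}, applied column-by-column to the output matrix. Recall from Section~\ref{ssec:nlsdr} that the low-dimensional embedding takes the form $\mathbf{X} = \{\sqrt{\bm\lambda}_1 \mathbf{q}_1, \sqrt{\bm\lambda}_2 \mathbf{q}_2, \ldots, \sqrt{\bm\lambda}_d \mathbf{q}_d\}$, so that the $j^{\text{th}}$ column of the target $\mathbf{y}$ (playing the role of the training outputs in the GPR formulation) is precisely $\sqrt{\bm\lambda}_j \mathbf{q}_j$. Since the inverse $(\mathbf{K} + \bm\sigma_n^2 \mathbf{I})^{-1}$ is a linear operator, it suffices to establish the identity on each such column and then stack the results.

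First, I would substitute the closed-form expression from Lemma~\ref{lemma3} into the product and apply it to a generic column $\sqrt{\bm\lambda}_j \mathbf{q}_j$. This yields
\[
(\mathbf{K} + \bm\sigma_n^2 \mathbf{I})^{-1} \sqrt{\bm\lambda}_j \mathbf{q}_j = \bm\alpha \sqrt{\bm\lambda}_j \mathbf{q}_j - \bm\alpha^2 \sum_{i=1}^d \frac{\mathbf{c}_i \mathbf{q}_i \mathbf{q}_i^\top \sqrt{\bm\lambda}_j \mathbf{q}_j}{1 + \bm\alpha \mathbf{c}_i}.
\]
Next, I would invoke the orthonormality relation $\mathbf{q}_i^\top \mathbf{q}_j = \bm\delta_{i,j}$ assumed in Lemma~\ref{lemma1}, which collapses the summation to the single surviving term at $i = j$, giving $\bm\alpha \sqrt{\bm\lambda}_j \mathbf{q}_j - \bm\alpha^2 \mathbf{c}_j \sqrt{\bm\lambda}_j \mathbf{q}_j / (1 + \bm\alpha \mathbf{c}_j)$. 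A short algebraic simplification (combining the two terms over a common denominator, whereby $\bm\alpha(1 + \bm\alpha \mathbf{c}_j) - \bm\alpha^2 \mathbf{c}_j = \bm\alpha$) reduces the expression to $\bm\alpha \sqrt{\bm\lambda}_j \mathbf{q}_j / (1 + \bm\alpha \mathbf{c}_j)$, which is precisely the $j^{\text{th}}$ column claimed.

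Finally, I would assemble the $d$ columns to recover the stated block form. Concatenating the per-column identities for $j = 1, 2, \ldots, d$ yields
\[
(\mathbf{K} + \bm\sigma_n^2 \mathbf{I})^{-1}\mathbf{y} = \Bigl\{ \tfrac{\bm\alpha\sqrt{\bm\lambda}_1 \mathbf{q}_1}{1 + \bm\alpha \mathbf{c}_1} \;\; \tfrac{\bm\alpha\sqrt{\bm\lambda}_2 \mathbf{q}_2}{1 + \bm\alpha \mathbf{c}_2} \;\; \ldots \;\; \tfrac{\bm\alpha\sqrt{\bm\lambda}_d \mathbf{q}_d}{1 + \bm\alpha \mathbf{c}_d} \Bigr\},
\]
completing the proof. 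The argument is essentially bookkeeping, so I do not anticipate a serious obstacle; the only subtlety is being explicit that the $d$ independent GPR models (one per output dimension, as noted in the footnote of Section~\ref{ssec:gpr}) share a common kernel matrix, which is what allows a single application of Lemma~\ref{lemma3} to produce the full block-structured solution. This clean decoupling across columns, enabled by the orthogonality of the eigenbasis, is what makes the rank-$d$ case reduce cleanly to the rank-$1$ computation already performed in Lemma~\ref{lemma4}.
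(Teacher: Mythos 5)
Your proposal is correct and follows essentially the same route as the paper's own proof: apply the inverse from Lemma~\ref{lemma3} to each column $\sqrt{\bm\lambda}_j{\bf q}_j$ of ${\bf y}$, use the orthonormality ${\bf q}_i^\top{\bf q}_j = {\bm\delta}_{i,j}$ to collapse the sum to the $i=j$ term, and simplify to $\frac{{\bm\alpha}\sqrt{\bm\lambda}_j{\bf q}_j}{1+{\bm\alpha}{\bf c}_j}$ before stacking the columns. No meaningful differences to report.
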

\begin{proof}
Assuming the intrinsic dimensionality of the low-dimensional manifold to be $d$ implies that the inverse of the Gaussian kernel is as defined as in~\eqref{eqn:a6}. ${\bf y}$ is $\{ \sqrt{\bm \lambda}_1{\bf q}_1 \; \sqrt{\bm \lambda}_2{\bf q}_2 \; \ldots \; \sqrt{\bm \lambda}_d{\bf q}_d \}$ in this case  (refer Section~\ref{ssec:NLDR}), where ${{\bf q}_i^\top}{{\bf q}_j} = {\bm \delta}_{i,j}$.
Each of the ${k}$ dimensions of ${\big( {\bf K} + {{\bm \sigma}_n}^2{\bf I} \big)}^{-1}{\bf y}$ can be processed independently, similar to the previous lemma. For the ${i}$\textsuperscript{th} dimension, we have,
\begin{equation}
\begin{split}
{\big( {\bf K} + {{\bm \sigma}_n}^2{\bf I} \big)}^{-1}{\bf y_i}& = \big( {\bm \alpha}{\bf I} - {\bm \alpha}^2 \sum\limits_{j=1}^{d} \frac{{{\bf c}_j}{{\bf q}_j}{{\bf q}_j^\top}}{1 + {\bm \alpha}{{\bf c}_j}} \big) \big(\sqrt{\bm \lambda}_i{\bf q}_i\big)\\
& = {\bm \alpha}\sqrt{\bm \lambda}_i{\bf q}_i - {\bm \alpha}^2 \sum\limits_{j=1}^{d} \frac{{{\bf c}_j}{{\bf q}_j}{{\bf q}_j^\top}{\bf q}_i\big(\sqrt{\bm \lambda}_i\big)}{1 + {\bm \alpha}{{\bf c}_j}} \\
& = {\bm \alpha}\sqrt{\bm \lambda}_i{\bf q}_i - \frac{{\bm \alpha}^2\sqrt{\bm \lambda}_i{{\bf c}_i}{\bf q}_i}{{1 + {\bm \alpha}{{\bf c}_i}}} \\
& = \frac{{\bm \alpha}\sqrt{\bm \lambda}_i{\bf q}_i}{{1 + {\bm \alpha}{{\bf c}_i}}}
\end{split}
\label{eqn:a8}
\end{equation}
Thus we get the result,
\begin{equation}
{\big( {\bf K} + {\bf \sigma_n}^2{\bf I} \big)}^{-1}{\bf y} = \{ \frac{{\bm \alpha}\sqrt{\bm \lambda}_1{\bf q}_1}{{1 + {\bm \alpha}{{\bf c}_1}}} \; \frac{{\bm \alpha}\sqrt{\bm \lambda}_2{\bf q}_2}{{1 + {\bm \alpha}{{\bf c}_2}}} \; \ldots \; \frac{{\bm \alpha}\sqrt{\bm \lambda}_d{\bf q}_d}{{1 + {\bm \alpha}{{\bf c}_d}}} \}
\label{eqn:a9}
\end{equation}
\end{proof}

\end{document}